\newcommand{\C}{\mathbb{C}}
\newcommand{\N}{\mathbb{N}}
\newcommand{\R}{\mathbb{R}}
\newcommand{\M}{\mathcal{M}}
\newcommand{\W}{\mathcal{W}}
\newcommand{\Diff}{\mathrm{Diff}}
\newcommand{\diam}{\mathrm{diam}}
\newcommand{\Euc}{\mathrm{Euc}}
\newcommand{\Isom}{\mathrm{Isom}}
\newcommand{\id}{\mathrm{id}}
\newcommand{\vol}{\mathrm{vol}}
\newcommand{\Cb}{\mathbf{C}}
\newcommand{\Lb}{\mathbf{L}}
\newcommand{\ellb}{\bm{\ell}}
\newtheorem{theorem}{Theorem}[section]
\newtheorem{proposition}[theorem]{Proposition}
\newtheorem{lemma}[theorem]{Lemma}
\theoremstyle{definition}
\newtheorem{definition}{Definition}
\newtheorem{remark}[theorem]{Remark}
\title{Geometric Scattering on Manifolds}
\author{
Michael Perlmutter\thanks{Corresponding author} \\
 Dept. of CMSE\thanks{Computational Mathematics, Science \& Engineering}\\
 Michigan State University\\
 East Lansing, MI, USA\\
 \texttt{perlmut6@msu.edu}
\And
Guy Wolf\thanks{\href{http://guywolf.org/}{\texttt{http://guywolf.org/}}}\\
Dept. of Math. and Stat. \\
Universit\'{e} de Montr\'{e}al \\
  Montreal, QC, Canada \\
\texttt{guy.wolf@umontreal.edu}
\And
Matthew Hirn\thanks{\href{https://matthewhirn.com/}{\texttt{https://matthewhirn.com/}}} \\ 
Dept. of CMSE\footnotemark[2]\\
Dept. of Mathematics\\
Michigan State University\\
East Lansing, MI, USA \\
\texttt{mhirn@msu.edu}
}
\begin{document}

~\vspace{-50pt}

\maketitle

\vspace{-30pt}

\begin{abstract}%
The Euclidean scattering transform was introduced nearly a decade ago to improve the mathematical understanding of the success of convolutional neural networks (ConvNets) in image data analysis and other tasks. Inspired by recent interest in geometric deep learning, which aims to generalize ConvNets to manifold and graph-structured domains, we generalize the scattering transform to compact manifolds. Similar to the Euclidean scattering transform, our geometric scattering transform is based on a cascade of designed filters and pointwise nonlinearities, which enables rigorous analysis of the feature extraction provided by scattering layers. Our main focus here is on theoretical understanding of this geometric scattering network, while setting aside implementation aspects, although we remark that application of similar transforms to graph data analysis has been studied recently in related work. Our results establish conditions under which geometric scattering provides localized isometry invariant descriptions of manifold signals, and conditions in which the filters used in our network are stable to families of diffeomorphisms formulated in intrinsic manifolds terms. These results not only generalize the deformation stability and local roto-translation invariance of Euclidean scattering, but also demonstrate the importance of linking the used filter structures (e.g., in geometric deep learning) to the underlying manifold geometry, or the data geometry it represents.
\end{abstract}


\section{Introduction}

Characterizing variability in data is one of the most fundamental aspects of modern data analysis, which appears at the core of both supervised and unsupervised learning tasks. In particular, a crucial part of typical machine learning methods is to separate informative sources of variability from disruptive sources, which are considered via deformation groups or noise models. For example, linear classification methods aim to find a separating hyperplane that captures undesired (i.e., irrelevant for discriminating between classes) variance, and then eliminate it by projecting the data on normal directions of the hyperplane. Principal component analysis (PCA), on the other hand, aims to find a hyperplane that maximizes the captured variance, while treating directions of minimal variance as noise directions. Nonlinear methods extend such notions to consider richer models (beyond linear hyperplanes and projections) for capturing or eliminating variability - either task dependent or following an assumed model for the data. 

In the past decade, deep learning methods have shown impressive potential in capturing complex nonlinear variability by using a cascade of linear transformations and simple nonlinear activations that together form 
approximations of target functions. While there is a multitude of applications where such methods are effective, some of their most popular achievements are in fields traditionally associated with signal processing in general, and image processing (i.e., computer vision) in particular. In such settings, the data has an inherent spatial (or temporal) structure, and collected observations form signals over it. Convolutional neural networks (ConvNets) utilize this structure to treat their linear transformations as convolutions of input signals with filters that are learned in the training process. In classification applications, the resulting deep cascade of convolutional filters (and nonlinearities) eliminates intra-class heterogeneity and results in high accuracy classifiers. Furthermore, hidden ConvNet layers yield coefficients that isolate semantic features and can be utilized for unsupervised feature extraction, e.g., as part of dimensionality reduction or generative~models.

In an effort to improve mathematical understanding of deep convolutional networks and their learned features, \citet{mallat:firstScat2010,mallat:scattering2012} presented the \emph{scattering transform}. This transform has an architecture similar to ConvNets, based on a cascade of convolutional filters and simple pointwise nonlinearities in the form of complex modulus or absolute value. However, unlike deep learning methods, this transform does not learn its filters from data, but rather has them designed to provide guaranteed stability to a given family of deformations. As shown in~\citet{mallat:scattering2012}, under some admissibility conditions, one can use appropriate wavelet filter banks in the scattering transform to provide invariance to the actions of Lie groups. Moreover, the resulting scattering features also provide Lipschitz stability to small diffeomorphisms, where the size of a diffeomorphism is quantified by its deviation from a translation.
These notions were made concrete in \citet{bruna:scatClass2011,bruna:invariantScatConvNet2013},  \citet{sifre:rotoScatTexture2012,mallat:rotoScat2013,mallat:rigidMotionScat2014} and \citet{oyallon:scatObjectClass2014} using groups of translations, rotations, and scaling operations, with applications in image and texture classification. Further applications of the scattering transform and its deep filter bank approach were shown effective in several fields, such as audio processing \citep{anden:scatAudioClass2011,anden:deepScatSpectrum2014,wolf:BSS-mlsp,wolf:BSS, arXiv:1807.08869}, medical signal processing \citep{talmon:scatManifoldHeart2014}, and quantum chemistry~\citep{hirn:waveletScatQuantum2016, eickenberg:3DSolidHarmonicScat2017, eickenberg:scatMoleculesJCP2018, brumwell:steerableScatLiSi2018}.

Another structure often found in modern data is that of a graph between collected observations or features. Such structure naturally arises, for example, in transportation and social networks, where weighted edges represent connections between locations (e.g., roads or other routes) or user profiles (e.g., friendships or exchanged communications). They are also common when representing molecular structures in chemical data, or biological interactions in biomedical data. Relatedly, signals supported on manifolds or manifold valued data is also becoming increasing prevalent, in particular in shape matching and computer graphics. Manifold models for high dimensional data also arise in the field of manifold learning \citep[e.g.,][]{tenenbaum:isomap2000,coifman:diffusionMaps2006,maaten:tSNE2008}, in which unsupervised algorithms infer data-driven geometries and use them to capture intrinsic structure and patterns in data. 
As such, a large body of work has emerged to explore the generalization of spectral and signal processing notions to manifolds \citep[e.g.,][]{coifman:geometricHarmonics2006} and graphs \citep[and references therein]{shuman:emerging2013}. In these settings, functions are supported on the manifold or the vertices of the graph, and the eigenfunctions of the Laplace-Beltrami operator or the eigenvectors of the graph Laplacian serve as the Fourier harmonics. 


This increasing interest in non-Euclidean data, particularly graphs and manifolds,
has led to a new research direction known as \emph{geometric deep learning}, which
aims to generalize convolutional networks to graph and manifold structured data~\citep[and references therein]{Bronstein:geoDeepLearn2017}. Unlike classical ConvNets, in which filters are learned on collected data features (i.e., in spatial or temporal domain), many geometric deep learning approaches for manifolds learn spectral coefficients of their filters \citep{bruna:spectralNN2014, NIPS2016_6081, Levie:CayleyNets2017, yi:syncspecCNN2017}. The frequency spectrum is defined by the eigenvalues of the Laplace-Beltrami operator on the manifold, which links geometric deep learning with manifold and graph signal processing. 


Inspired by geometric deep learning, recent works have also proposed an extension of the scattering transform
to graph domains. These mostly focused on finding features that represent a graph structure (given a fixed set of signals on it) while being stable to graph perturbations. In \citet{gama:diffScatGraphs2018}, a cascade of diffusion wavelets from~\citet{coifman:diffWavelets2006} was proposed, and its Lipschitz stability was shown with respect to a global diffusion-inspired distance between graphs. A similar construction discussed in \citet{zou:graphCNNScat2018} was shown to be stable to permutations of vertex indices, and to small perturbations of edge weights. Finally, \citet{gao:graphScat2018} established the viability of scattering coefficients as universal graph features for data analysis tasks (e.g., in social networks and biochemistry data).


In this paper, we take the less charted path and consider the manifold aspect of geometric deep learning. In this setting, one needs to process signals over a manifold, and in particular, to represent them with features that are stable to orientations, noise, or deformations over the manifold geometry. 
In order to work towards these aims, we define a scattering transform on compact smooth Riemannian manifolds without boundary, which we call \emph{geometric scattering}. Our construction is based on convolutional filters defined spectrally via the eigendecomposition of the Laplace-Beltrami operator over the manifold, as discussed in Section~\ref{sec: ops}. We show that these convolutional operators can be used to construct a frame, which, with appropriately chosen low-pass and high-pass filters, forms a wavelet frame similar to the diffusion wavelets constructed in \citet{coifman:diffWavelets2006}. Then,  in Section~\ref{sec: transform}, a cascade of these generalized convolutions and pointwise absolute value operations is used to map signals on the manifold to scattering coefficients that encode approximate local invariance to isometries, which correspond to translations, rotations, and reflections in Euclidean space. In Section~\ref{sec: stability}, we analyze the commutators of the filters used in our network with the action of diffeomorphisms. We believe that the results presented there will allow us to study the stability of our scattering network to these diffeomorphisms in future work using a notion of stability that analogous to the Lipschitz stability considered in~\citet{mallat:scattering2012} on Euclidean space. 
As we discuss in Section~\ref{sec: stability}, this requires a quantitative notion of deformation size. We consider three formulations of such a notion, 
which measure how far a given diffeomorphism is from being an isometry, 
and explore the type of stability that can be achieved for the geometric scattering transform with respect to each of them. Our results provide a path forward for utilizing the scattering mathematical framework to analyze and understand geometric deep learning, while also shedding light on the challenges involved in such generalization to non-Euclidean domains.



\subsection{Notation}
Let $\M$ be a smooth, compact, and connected, $d$-dimensional Riemannian manifold without boundary contained in $\mathbb{R}^n$. 
Let $r:\M\times\M\rightarrow\mathbb{R}$ denote the geodesic distance between two points, and let
 $\Delta$ be the Laplace-Beltrami operator on $\M.$ The eigenfunctions and non-unique eigenvalues of $-\Delta$ are denoted by $\varphi_k$ and $\lambda_k$, respectively. Since $\M$ is compact, the spectrum of $-\Delta$ is countable and
we may assume that  $\{ \varphi_k \}_{k \in \N}$ forms an orthonormal basis for $\Lb^2 (\M)$. We define the Fourier transform of $f \in \Lb^2 (\M)$ as the sequence $\widehat{f} \in \ellb^2$ defined by $\widehat{f}(k) = \langle f, \varphi_k \rangle$. The set of unique eigenvalues of $-\Delta$ is denoted by $\Lambda$, and for $\lambda\in\Lambda$ we let $m(\lambda)$ and $E_\lambda$ denote the corresponding multiplicities and eigenspaces. The diffeomorphism group of $\M$ is $\Diff (\M)$, and the isometry group is $\Isom (\M) \subset \Diff (\M)$. For a diffeomorphism $\zeta \in \Diff (\M)$, we let $V_\zeta$ be the operator $V_\zeta f(x)=f(\zeta^{-1}(x)),$ and let $\|\zeta\|_\infty=\sup_{x\in\M} r(x,\zeta(x))$.  We  let $C(\M)$ denote a constant which depends only on the manifold $\M,$ and for two operators $A$ and $B$ defined on $\Lb^2(\M),$ we let $[A,B]\coloneqq AB-BA$ denote their commutator.

\section{Problem Setup}
\label{sec: problem setup}


We consider signals $f \in \Lb^2 (\M)$ and  representations $\Theta : \Lb^2 (\M) \rightarrow \ellb^2 (\Lb^2 (\M)),$
which encode information about the signal, 
often referred to as the ``features'' of $f$. A common goal in machine learning tasks is to classify or cluster signals based upon these features. However, 
 we often want to consider two signals, or even two manifolds, to be equivalent if they differ by the action of a global isometry. 
Therefore, we seek to construct a family of representations, $(\Theta_t)_{t \in (0, \infty)}$, which are invariant to isometric transformations of any $f \in \Lb^2 (\M)$ up to the scale $t$. Such a representation should satisfy a condition similar to, 
\begin{equation} \label{eqn: generic isometry invariance}
    \| \Theta_t (f) - \Theta_t (V_{\zeta} f) \|_{2,2} \leq \alpha (\zeta) \beta (t) \| f \|_2, \quad \forall \, f \in \Lb^2 (\M), \, \zeta \in \Isom (\M),
\end{equation}
where $\alpha (\zeta)$ measures the size of the isometry with $\alpha (\id) = 0$, and $\beta(t)$ decreases to zero as the scale $t$ grows to infinity. 

Along similar lines, it is desirable that the action of small diffeomorphisms on $f,$ or on the underlying manifold $\M,$ should not have a large impact on the representation of the inputted signal for tasks such as  classification or regression. However, the set of $\Diff(\M)$ is a large group and invariance over $\Diff (\M)$ would collapse the variability even between vastly different signals. Thus, in this case, we want a family $(\Theta_t)_{t \in (0, \infty)}$ that is stable to diffeomorphism actions on $f$, but not invariant. This leads to a condition such as:
\begin{equation} \label{eqn: generic diffeo stability}
    \| \Theta_t(f) - \Theta_t (V_{\zeta} f) \|_{2,2} \leq A(\zeta) \| f \|_2, \quad \forall \, t \in (0, \infty), \, f \in \Lb^2 (\M), \, \zeta \in \Diff (\M),
\end{equation}
where $A(\zeta)$ measures how much $\zeta$ differs from being an isometry, with $A(\zeta) = 0$ if $\zeta \in \Isom (\M)$ and $A(\zeta) > 0$ if $\zeta \notin \Isom (\M)$.

Combining \eqref{eqn: generic isometry invariance} and \eqref{eqn: generic diffeo stability}, we have the following goal:
\begin{align}
    \| \Theta_t(f) - \Theta_t (V_{\zeta} f) \|_{2,2} \leq [\alpha (\zeta) \beta (t) + &A(\zeta)] \| f \|_2, \label{eqn: ideal bound} \\
    &\forall \, t \in (0, \infty), \, f \in \Lb^2 (\M), \, \zeta \in \Diff (\M). \nonumber
\end{align}
At the same time, the representations $(\Theta_t)_{t \in (0, \infty)}$ should not be trivial. Indeed, distinguishing different classes or types of signals often requires leveraging subtle information in the signals. This information is often stored in the high frequencies, i.e., $\widehat{f}(k)$ for large $k$. Our problem is thus to find a family of representations that are stable to diffeomorphisms, discriminative between different types of signals, which allow one to control the scale of isometric invariance, and do so for data that is supported on a manifold. The wavelet scattering transform of \citet{mallat:scattering2012} achieves goals analogous to the ones presented here, but for Euclidean supported signals. Therefore, we seek to construct a geometric version of the scattering transform, using filters  corresponding to the spectral geometry of  $\M,$ and to show that it has similar properties to its Euclidean counterpart. 

\section{Spectral Integral Operators}
\label{sec: ops}

Similar to traditional ConvNets, the Euclidean scattering transform constructed in \citet{mallat:scattering2012} consists of an alternating cascade of convolutions and nonlinearities. In the manifold setting, it is not immediately clear how to define convolution operators, because translation is not well defined. In this section, we introduce a family of operators on $\Lb^2(\M)$ that are analogous to convolution operators on Euclidean space due to the characterization of Euclidean convolution operators as Fourier multipliers. These operators will then be used in Sec.~\ref{sec: transform} to construct the geometric scattering transform.

For a function $\eta:\Lambda\rightarrow\mathbb{R},$ we define a {\it spectral kernel} $K_\eta$ by \begin{equation} \label{eqn: convolution kernel}
K_\eta(x,y) = \sum_{k \in \N} \eta (\lambda_k) \varphi_k (x) \overline{\varphi}_k (y)
\end{equation}
and refer to the integral operator $T_\eta,$ with kernel $K_\eta,$ as a {\it spectral integral operator}.  
Grouping together the $\varphi_k$ belonging to each eigenspace $E_\lambda,$ we write \begin{equation}\label{Kdecomp}
    K_\eta(x,y)=\sum_{\lambda\in\Lambda}\eta(\lambda) K^{(\lambda)}(x,y),
\end{equation}
where
$ 
 K^{(\lambda)}(x,y) = \sum_{\lambda_k=\lambda} \varphi_k (x) \overline{\varphi}_k (y).
$ 
Using the fact that $\{ \varphi_k \}_{k \in \N}$ is an orthonormal basis for $\Lb^2 (\M)$, a simple calculation shows that 
\begin{equation} \label{eqn: int op norm in onb}
\| T_\eta f \|_2^2 = \sum_{k \in \N} |\eta (\lambda_k)|^2 |\langle f, \varphi_k \rangle|^2\leq \|\eta\|^2_\infty\|f\|^2,
\end{equation}
and so, if $\eta \in \Lb^\infty,$ then $T_\eta$ is a bounded operator of $\Lb^2(\M)$ with operator norm $\|\eta\|_\infty.$ In particular, if $\|\eta\|_\infty\leq1,$ then $T_\eta$ is nonexpansive.
Operators  of this form are analogous  to  convolution operators defined on $\R^d$ since the latter  are diagonalized in the Fourier basis. To further emphasize this connection, we note the following theorem, which shows that spectral integral operators are equivariant with respect to isometries.
\begin{theorem}\label{invariso}
For every spectral integral operator $T_\eta,$ and for every $f\in\Lb^2(\M),$
\begin{equation*}
T_\eta V_\zeta f=V_\zeta T_\eta f, \quad \forall \, \zeta \in \Isom (\M).
\end{equation*}
\end{theorem}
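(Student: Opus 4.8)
The plan is to reduce the statement to the single geometric fact that the Laplace--Beltrami operator commutes with isometries, and then propagate this through the spectral decomposition of $T_\eta$ given in \eqref{Kdecomp}.

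First I would record two properties of $V_\zeta$ for $\zeta\in\Isom(\M)$: (i) since $\zeta$ preserves the Riemannian volume measure, $V_\zeta$ is a unitary operator on $\Lb^2(\M)$ with $V_\zeta^{-1}=V_{\zeta^{-1}}$; and (ii) since $\Delta$ is constructed intrinsically from the metric tensor, $V_\zeta\Delta=\Delta V_\zeta$ on smooth functions and hence on the domain of $\Delta$. From (ii), if $-\Delta f=\lambda f$ then $-\Delta V_\zeta f=V_\zeta(-\Delta f)=\lambda V_\zeta f$, so $V_\zeta$ maps each eigenspace $E_\lambda$ onto itself. Combined with (i), this means $V_\zeta$ commutes with the orthogonal projection $P_\lambda$ onto $E_\lambda$; and $P_\lambda$ is exactly the integral operator with kernel $K^{(\lambda)}$.

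Next I would write $T_\eta=\sum_{\lambda\in\Lambda}\eta(\lambda)P_\lambda$, which holds because $\{\varphi_k\}$ diagonalizes $T_\eta$ with eigenvalue $\eta(\lambda_k)$, the computation underlying \eqref{eqn: int op norm in onb}; the series converges strongly on $\Lb^2(\M)$ when $\eta\in\Lb^\infty$, and on the natural domain of $T_\eta$ in general. Then for $f\in\Lb^2(\M)$,
\begin{equation*}
V_\zeta T_\eta f=\sum_{\lambda\in\Lambda}\eta(\lambda)\,V_\zeta P_\lambda f=\sum_{\lambda\in\Lambda}\eta(\lambda)\,P_\lambda V_\zeta f=T_\eta V_\zeta f,
\end{equation*}
where the middle equality is $V_\zeta P_\lambda=P_\lambda V_\zeta$ and the interchange of $V_\zeta$ with the (strongly convergent) sum is justified by the boundedness of $V_\zeta$. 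Equivalently, one can argue purely at the kernel level: for each $\lambda$, the set $\{\varphi_k\circ\zeta:\lambda_k=\lambda\}$ is another orthonormal basis of $E_\lambda$, the reproducing kernel $K^{(\lambda)}$ of a finite-dimensional space does not depend on the chosen orthonormal basis, so $K^{(\lambda)}(\zeta(x),\zeta(y))=K^{(\lambda)}(x,y)$ and hence $K_\eta(\zeta(x),\zeta(y))=K_\eta(x,y)$; the identity then drops out after the change of variables $y\mapsto\zeta(y)$ in the integral defining $T_\eta$, again using that $\zeta$ preserves $\vol$.

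The main obstacle is not any one computation but pinning down the geometric input precisely: that $V_\zeta$ is unitary and preserves the eigenspaces of $-\Delta$. This is where the hypothesis $\zeta\in\Isom(\M)$ is genuinely used — a general diffeomorphism conjugates $\Delta$ into a different elliptic operator — and it is the reason isometry \emph{equivariance} holds exactly here, whereas for arbitrary diffeomorphisms only the approximate/stability estimates of Section~\ref{sec: stability} are available. The remaining points, namely exchanging $V_\zeta$ with the spectral sum and (in the kernel argument) the basis-independence of $K^{(\lambda)}$, are routine.
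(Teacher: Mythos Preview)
Your proposal is correct and matches the paper's proof: the paper carries out exactly the kernel-level argument you sketch in your ``equivalently'' clause, showing explicitly via a unitary change-of-basis matrix $A_\lambda$ that $K^{(\lambda)}(\zeta(x),\zeta(y))=K^{(\lambda)}(x,y)$ and then changing variables in the integral. Your primary operator-theoretic formulation ($V_\zeta$ unitary, commutes with $\Delta$, hence with each $P_\lambda$, hence with $T_\eta=\sum_\lambda\eta(\lambda)P_\lambda$) is the same idea phrased one level more abstractly.
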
 

\begin{proof}
Let $\zeta$ be an isometry and let $\psi_k (x) = \varphi_k (\zeta (x))$. For $\lambda \in \Lambda$ and $x\in\M,$ we define $\vec{\varphi}_{\lambda} (x) \in \C^{m(\lambda)}$ and $\vec{\psi}_{\lambda} (x) \in \C^{m(\lambda)}$ by 
\begin{equation*}
\vec{\varphi}_{\lambda} (x) = ( \varphi_k (x) )_{k : \lambda_k = \lambda}\quad\text{and}\quad 
\vec{\psi}_{\lambda} (x) = ( \psi_k (x) )_{k : \lambda_k = \lambda}.
\end{equation*}
Since $\zeta$ is an isometry, $\{ \varphi_k \}_{k : \lambda_k = \lambda}$ and $\{ \psi_k \}_{k : \lambda_k = \lambda}$ are both orthonormal bases for $E_{\lambda}$. Therefore, there exists an $m(\lambda) \times m (\lambda)$ unitary matrix $A_{\lambda}$ (that does not depend upon $x$) such that $\vec{\psi}_{\lambda} (x) = A_{\lambda} \vec{\varphi}_{\lambda} (x)$. Using this fact, we see 
\begin{align*}
K^{(\lambda)} \left(\zeta(x), \zeta(y)\right) &= \sum_{k : \lambda_k = \lambda} \varphi_k \left(\zeta(x)\right) \overline{\varphi}_k \left(\zeta(y)\right) 
= \sum_{k : \lambda_k = \lambda} \psi_k (x) \overline{\psi}_k (y) 
= \langle \vec{\psi}_{\lambda} (x), \vec{\psi}_{\lambda} (y) \rangle \\
&= \langle A_{\lambda} \vec{\varphi}_{\lambda} (x), A_{\lambda} \vec{\varphi}_{\lambda} (y) \rangle = \langle \vec{\varphi}_{\lambda} (x), \vec{\varphi}_{\lambda} (y) \rangle = K^{(\lambda)} (x,y).
\end{align*}
Therefore, by (\ref{Kdecomp}), we see  that $K_\eta(\zeta (x), \zeta (y)) = K_\eta(x,y)$ for all $(x,y) \in \M \times \M.$ Now letting $x = \zeta (\widetilde{x})$ and changing variables $y=\zeta(z),$ we have
\begin{align*}
T_\eta V_{\zeta} f(x) &= \int_{\M} K_\eta(x,y) f \left(\zeta^{-1}(y)\right) \, dV(y)
= \int_{\M} K_\eta\left(x, \zeta (z)\right) f(z) \, dV(z) \\
&= \int_{\M} K_\eta\left(\zeta (\widetilde{x}\right), \zeta (z)) f(z) \, dV(z)
= \int_{\M} K_\eta\left(\widetilde{x}, z\right) f(z) \, dV(z) \\
&= T_\eta f (\widetilde{x}) = T_\eta f \left(\zeta^{-1} (x)\right) = V_{\zeta} T_\eta f(x)
\end{align*}
as desired.
\end{proof}

We will consider frame analysis operators that are constructed using a countable family of spectral integral operators. We call a spectral function $g : [0, \infty) \rightarrow \R$ a \textit{low-pass filter} if 
\begin{equation}\label{eqn: deflowpass}
|g (\lambda)|\leq g(0)=1, \, \forall \, \lambda \geq 0,\quad\text{and}\quad\lim_{\lambda \rightarrow \infty} g (\lambda) = 0.
\end{equation} 
Similarly, we call $h : [0, \infty) \rightarrow \R$ a \textit{high-pass filter} if 
 \begin{equation*}
     h(0)=0\quad \text{and}\quad \|h\|_\infty\leq1.
 \end{equation*}
Figure \ref{fig: bunny} illustrates a low-pass operator $T_g$, via its impulse responses, on the popular Stanford bunny~\citep{turk2005stanford} manifold.

We will assume that we have  a low-pass filter $g$,  and a family of high-pass filters $\{h_\gamma\}_{\gamma\in\Gamma},$ which satisfy a Littlewood-Paley type condition 
\begin{equation}\label{LPC}
A \leq m(\lambda) \left[ |g(\lambda)|^2 + \sum_{\gamma \in \Gamma} |h_{\gamma} (\lambda)|^2 \right] \leq B, \quad \forall \, \lambda \in \Lambda
\end{equation} 
for some $0<A\leq B.$ A frame analysis operator $\Phi:\Lb^2 (\M)\rightarrow\ellb^2 (\Lb^2 (\M))$ is then defined by
\begin{equation*}
\Phi f = \left\{ T_g f, ~ T_{h_{\gamma}} f : \gamma \in \Gamma \right\},
\end{equation*}
where  $T_g$ and $T_{h_\gamma}$ are the spectral integral operators corresponding to $g$ and $h_\gamma$, respectively. The Littlewood-Paley type condition \eqref{LPC} implies that the filters used to define $\Phi$ evenly cover the frequencies of $\M$. 
The following proposition shows that if $g$ and $\{h_\gamma\}_{\gamma\in\Gamma}$ satisfy (\ref{LPC}), then $\Phi$ has corresponding upper and lower frame bounds.  

\begin{proposition}
\label{LPprop}
Under the Littlewood-Paley condition (\ref{LPC}), $\Phi$ is a bounded operator from $\Lb^2 (\M)$ to   $\ellb^2 (\Lb^2 (\M))$ and 
\begin{equation*}A \| f \|_2^2 \leq \| \Phi f \|_{2,2}^2 \coloneqq \| T_g f \|_2^2 + \sum_{\gamma \in \Gamma} \| T_{h_{\gamma}} f \|_2^2 \leq B \| f \|_2^2, \enspace \forall \, f \in \Lb^2 (\M). 
\end{equation*}
In particular, if $A = B = 1$, then $\Phi$ is an isometry. 

\end{proposition}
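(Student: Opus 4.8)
The plan is to reduce the claimed two-sided bound to the pointwise Littlewood--Paley estimate \eqref{LPC} by expanding everything in the eigenbasis $\{\varphi_k\}_{k\in\N}$, turning the statement into a weighted form of Parseval's identity. First, apply \eqref{eqn: int op norm in onb} to $\eta = g$ and to each $\eta = h_\gamma$ to get
\[
\|T_g f\|_2^2 = \sum_{k\in\N} |g(\lambda_k)|^2\,|\widehat{f}(k)|^2, \qquad \|T_{h_\gamma} f\|_2^2 = \sum_{k\in\N} |h_\gamma(\lambda_k)|^2\,|\widehat{f}(k)|^2 .
\]
Because all the summands are nonnegative, we may add the second identity over $\gamma\in\Gamma$, freely interchanging the sums over $k$ and $\gamma$, and combine with the first to obtain
\[
\|\Phi f\|_{2,2}^2 = \sum_{k\in\N}\Bigl[\,|g(\lambda_k)|^2 + \sum_{\gamma\in\Gamma}|h_\gamma(\lambda_k)|^2\,\Bigr]\,|\widehat{f}(k)|^2 .
\]

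Next, regroup the $k$-sum by eigenvalue: writing $\sum_{k\in\N} = \sum_{\lambda\in\Lambda}\sum_{k : \lambda_k = \lambda}$ and using that the bracketed factor depends on $k$ only through $\lambda_k$, this becomes
\[
\|\Phi f\|_{2,2}^2 = \sum_{\lambda\in\Lambda}\Bigl[\,|g(\lambda)|^2 + \sum_{\gamma\in\Gamma}|h_\gamma(\lambda)|^2\,\Bigr]\Bigl(\sum_{k : \lambda_k = \lambda}|\widehat{f}(k)|^2\Bigr) .
\]
Since $\{\varphi_k : \lambda_k = \lambda\}$ is an orthonormal basis for $E_\lambda$, the inner sum equals $\|P_\lambda f\|_2^2$, where $P_\lambda$ denotes the orthogonal projection of $\Lb^2(\M)$ onto $E_\lambda$; moreover $\sum_{\lambda\in\Lambda}\|P_\lambda f\|_2^2 = \|f\|_2^2$ by Parseval.

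Finally, bound the bracketed coefficient of $\|P_\lambda f\|_2^2$ from above and from below using \eqref{LPC} and sum against the nonnegative weights $\|P_\lambda f\|_2^2$, invoking $\sum_{\lambda}\|P_\lambda f\|_2^2 = \|f\|_2^2$ to conclude $A\|f\|_2^2 \le \|\Phi f\|_{2,2}^2 \le B\|f\|_2^2$. The upper inequality shows in particular that $\Phi f\in\ellb^2(\Lb^2(\M))$ and that $\Phi$ is bounded; when $A = B = 1$ it forces $\|\Phi f\|_{2,2} = \|f\|_2$ for every $f$, i.e.\ $\Phi$ is an isometry.

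I do not anticipate a real obstacle here: once the norms $\|T_g f\|_2$ and $\|T_{h_\gamma} f\|_2$ are written out via \eqref{eqn: int op norm in onb}, the proof is essentially Parseval applied one filter at a time. The only points demanding a little care are the (routine) justification for interchanging the countable sums over $k$ and $\gamma$, which follows from nonnegativity, and correctly accounting for the eigenvalue multiplicities $m(\lambda)$ when relating the resulting $\lambda$-indexed sum to the frame constants $A$ and $B$ appearing in \eqref{LPC}.
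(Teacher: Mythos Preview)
Your strategy is exactly the paper's: expand each $\|T_\eta f\|_2^2$ via \eqref{eqn: int op norm in onb}, sum, regroup by eigenvalue, invoke \eqref{LPC}, and close with Parseval. Your regrouping
\[
\|\Phi f\|_{2,2}^2 \;=\; \sum_{\lambda\in\Lambda}\Bigl[\,|g(\lambda)|^2 + \sum_{\gamma\in\Gamma}|h_\gamma(\lambda)|^2\Bigr]\,\|P_\lambda f\|_2^2
\]
is correct---no factor $m(\lambda)$ appears here, because each $|\widehat f(k)|^2$ with $\lambda_k=\lambda$ already carries its own copy of the bracket. (The paper's appendix in fact writes this same line with an extra $m(\lambda)$ multiplying the bracket; that step is an arithmetic slip.)

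The gap is precisely the step you yourself flag as needing care. Condition \eqref{LPC} bounds $m(\lambda)$ times the bracket between $A$ and $B$, not the bracket itself, so from your identity one only obtains
\[
\sum_{\lambda\in\Lambda}\frac{A}{m(\lambda)}\,\|P_\lambda f\|_2^2 \;\le\; \|\Phi f\|_{2,2}^2 \;\le\; \sum_{\lambda\in\Lambda}\frac{B}{m(\lambda)}\,\|P_\lambda f\|_2^2.
\]
Since $m(\lambda)\ge 1$ the upper bound $B\|f\|_2^2$ follows, but the lower bound $A\|f\|_2^2$ does \emph{not}: for $f\in E_\lambda$ with $m(\lambda)>1$ you get only $(A/m(\lambda))\|f\|_2^2$. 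This is a defect of the stated hypothesis rather than of your argument; the proposition (and in particular the isometry case $A=B=1$) holds only if the factor $m(\lambda)$ is removed from \eqref{LPC}, and under that corrected hypothesis your proof is complete and identical in spirit to the paper's.
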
 
Proposition \ref{LPprop} is proved  by using (\ref{eqn: int op norm in onb}) to write

\begin{align*}
\| \Phi f \|_{2,2}^2 &= \| T_g f \|_2^2 + \sum_{\gamma \in \Gamma} \| T_{h_{\gamma}} f \|_2^2
= \sum_{k \in \N} |g(\lambda_k)|^2 |\langle f, \varphi_k \rangle|^2 + \sum_{\gamma \in \Gamma} \sum_{k \in \N} |h_{\gamma} (\lambda_k)|^2 |\langle f, \varphi_k \rangle|^2. 
\end{align*}
We then group together the terms corresponding to each eigenspace, $E_\lambda,$   apply (\ref{LPC}), and then use Parseval's Identity. Note that for each $\lambda\in\Lambda,$ $|g(\lambda)|^2$ and $|h(\lambda)|^2$ each appear $m(\lambda)$ times in the above sum. It is for this reason that the term $m(\lambda)$ is needed in (\ref{LPC}). For full details of the proof, see Appendix~\ref{sec: pfLPprop}.

\subsection{Geometric Wavelet Transforms} \label{sec: wavelet transforms}

As an important example of the frame analysis operator $\Phi$, we define a geometric wavelet transform in terms of a single 
low-pass filter. Given a  spectral function, $\eta : [0, \infty) \rightarrow \R,$ we define  the dilation at scale $2^j$ by
\begin{equation*}
\eta_j (\lambda) = \eta (2^j \lambda),
\end{equation*}
and a normalized spectral function $\widetilde{\eta}_j : \Lambda \rightarrow \R$ by
\begin{equation} \label{eqn: normalized spectral function}
\widetilde{\eta}_j (\lambda) = \frac{\eta_j (\lambda)}{\sqrt{m(\lambda)}}.
\end{equation}
The normalization factor of $\frac{1}{\sqrt{m(\lambda)}}$ will ensure that the wavelet frame constructed below satisfies the Littlewood-Paley condition (\ref{LPC}).

\begin{figure}
\centering
\subfigure[]{
\begin{adjustbox}{width=0.25\linewidth}
\input{bunny_tikz.tex}
\end{adjustbox}
\label{fig: bunny}
}
\subfigure[]{
\includegraphics[trim={0 48pt 0 45pt},width=0.35\linewidth,clip]{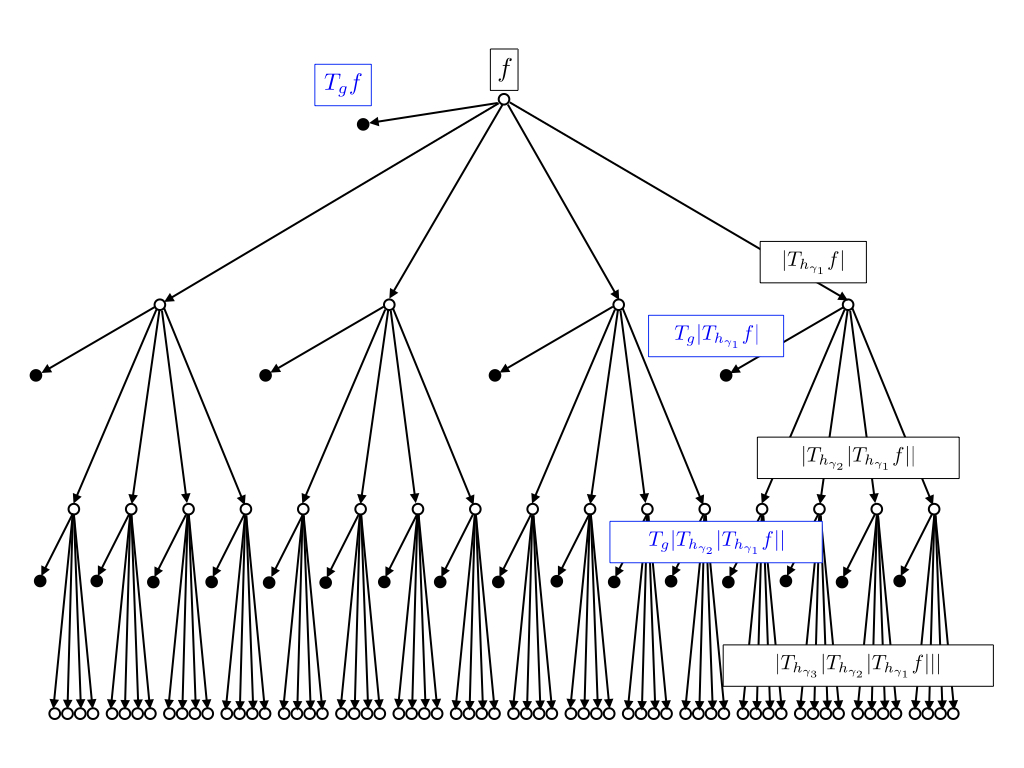}
\label{fig: scattering transform}
}
\caption{\protect\subref{fig: bunny} Illustration of $T_{g} \delta_x$ over the Stanford bunny manifold~\citep{turk2005stanford}, with $g$ a Gaussian restricted to $[0, \infty)$ at scale $2^J$ for three $\log_2$-scales $J$, and three Diracs $\delta_x$ centered at different regions of the manifold. \protect\subref{fig: scattering transform} The geometric scattering transform. Black: Equivariant intermediate layers. Blue: Invariant output coefficients at each layer.}
\end{figure}

Let $g:[0,\infty)\rightarrow\mathbb{R},$ be a nonincreasing low-pass filter, and let $J\in\mathbb{Z}.$ In a manner analogous to classical wavelet constructions, \citep[see for example,][]{meyer:waveletsOperators1993}, we define a high-pass filter $h:[0,\infty)\rightarrow\mathbb{R}$ by
\begin{equation} \label{eqn: wavelet spectral function}
h (\lambda) = \left[ |g(\lambda / 2)|^2 - |g (\lambda)|^2 \right]^{\frac{1}{2}}
\end{equation}
and observe that
\begin{equation} \label{eqn: hjtelescope}
     h_j (\lambda) = \left[ |g_{j-1} (\lambda)|^2 - |g_j (\lambda)|^2 \right]^{\frac{1}{2}}.
\end{equation}
To emphasize the connection with classical wavelet transforms, we denote $T_{\tilde{g}_J}=A_J$ and $T_{\tilde{h}_j}=W_j.$
The geometric wavelet transform $\W_J : \Lb^2 (\M) \rightarrow \ellb^2 (\Lb^2 (\M))$ is then defined as 
\begin{equation*}
\W_J f = \{ A_J f, ~ W_j f \}_{j \leq J} \, .
\end{equation*}


The following proposition can be proved  by observing that  
\begin{equation*}
m (\lambda) \left[ \left|\widetilde{g}_J(\lambda)\right|^2 + \sum_{j \leq J} \left|\widetilde{h}_j (\lambda)\right|^2 \right] =\left|g\left(2^J \lambda\right)\right|^2 + \sum_{j \leq J} \left[ \left|g\left(2^{j-1} \lambda\right)\right|^2 - \left|g\left(2^j \lambda\right)\right|^2 \right] 
\end{equation*}
forms a telescoping sum, and that therefore, $\{A_J,W_j\}_{j\leq J}$ satisfies the Littlewood-Paley condition $(\ref{LPC})$ with $A=B=1.$ We give a complete proof in Appendix~\ref{sec: pfwaveletisometry}.

\begin{proposition}\label{prop: waveletisometry} For all $J\in\mathbb{Z},$
the geometric wavelet transform $\W_J$ is an isometry from $\Lb^2(\M)$ to $\ellb^2(\Lb^2(\M))$, i.e.,  
\begin{equation*}
    \| \W_J f \|_{2,2} = \| f \|_2, \quad \forall \, f \in \Lb^2(\M).
\end{equation*}\end{proposition}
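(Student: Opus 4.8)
The plan is to reduce the claim to Proposition~\ref{LPprop}. Observe first that $\W_J$ is exactly the frame analysis operator $\Phi$ built from the single low-pass filter $\widetilde{g}_J$ and the family of high-pass filters $\{\widetilde{h}_j\}_{j\leq J}$, i.e., with index set $\Gamma=\{j\in\Z:j\leq J\}$. Hence, by the last sentence of Proposition~\ref{LPprop}, it suffices to show that this particular collection of filters satisfies the Littlewood--Paley condition~\eqref{LPC} with $A=B=1$; everything after that is automatic.

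To that end, I would fix $\lambda\in\Lambda$ and compute the bracketed quantity in~\eqref{LPC}. By the normalization~\eqref{eqn: normalized spectral function}, the factor $1/\sqrt{m(\lambda)}$ built into each $\widetilde{g}_J$ and $\widetilde{h}_j$ exactly cancels the leading $m(\lambda)$ in~\eqref{LPC}, and using the identity~\eqref{eqn: hjtelescope} for $h_j$ one arrives at
\[
m(\lambda)\left[\,|\widetilde{g}_J(\lambda)|^2+\sum_{j\leq J}|\widetilde{h}_j(\lambda)|^2\,\right]=|g(2^J\lambda)|^2+\sum_{j\leq J}\left[\,|g(2^{j-1}\lambda)|^2-|g(2^j\lambda)|^2\,\right],
\]
which is the displayed identity stated just before the proposition.

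Next I would evaluate the right-hand side as a limit of partial sums. Summing over $N\leq j\leq J$ telescopes to $|g(2^{N-1}\lambda)|^2-|g(2^J\lambda)|^2$, so after adding back the $|g(2^J\lambda)|^2$ term the expression collapses to $|g(2^{N-1}\lambda)|^2$; letting $N\to-\infty$ and using that $g$ is continuous at the origin with $g(0)=1$ gives the value $1$ (for $\lambda=0$ this is immediate since every bracketed difference vanishes). Thus the bracketed quantity in~\eqref{LPC} equals $1$ for every $\lambda\in\Lambda$, condition~\eqref{LPC} holds with $A=B=1$, and Proposition~\ref{LPprop} then yields $\|\W_J f\|_{2,2}=\|f\|_2$ for all $f\in\Lb^2(\M)$.

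The only genuinely delicate point is the treatment of the infinite series over $j\leq J$: one should first note that each $\widetilde{h}_j$ is well defined, i.e., that $|g(2^{j-1}\lambda)|\geq|g(2^j\lambda)|$ so the square root in~\eqref{eqn: wavelet spectral function} is real, which follows from $g$ being nonincreasing and nonnegative, and that the partial sums are monotone and bounded, hence convergent. Granting that, the crux is simply the limit identification $\lim_{N\to-\infty}|g(2^{N-1}\lambda)|^2=1$, which rests on the normalization $g(0)=1$ together with (one-sided) continuity of $g$ at $0$. All the remaining manipulations---passing from the Littlewood--Paley bound to the frame bounds and then to $\|\Phi f\|_{2,2}^2$---are exactly those already carried out in the proof of Proposition~\ref{LPprop}, so no new work is needed there.
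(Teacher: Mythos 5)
Your proposal is correct and follows essentially the same route as the paper's proof in Appendix~\ref{sec: pfwaveletisometry}: verify the Littlewood--Paley identity \eqref{eqn: wavelet isometry proof 01} via the telescoping sum and the limit $\lim_{j\to-\infty}|g(2^j\lambda)|^2=g(0)=1$, then invoke Proposition~\ref{LPprop}. Your added remarks on the well-definedness of $h_j$ and the continuity of $g$ at $0$ are sensible refinements of the same argument, not a different approach.
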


An important example  is  $g (\lambda) = e^{-\lambda}$. In this case the low-pass kernel $G_J (x,y)$ 
corresponding to $T_{g_J}$ is the heat kernel on $\M$ at time $t = 2^J$,  the  kernel $\widetilde{G}_J (x,y)$ corresponding to $A_J=T_{\tilde{g}_J}$ 
is a normalized version, and the corresponding wavelet operators $W_j$ are similar to the diffusion wavelets introduced in \citet{coifman:diffWavelets2006}. The key difference between the construction presented here and the original diffusion wavelet construction is the normalization of the spectral function according to the multiplicity of each eigenvalue, which is needed to obtain an isometric wavelet transform.

\section{The Geometric Scattering Transform}
\label{sec: transform}

The {\it geometric scattering transform} is a nonlinear operator $S:\Lb^2(\M)\rightarrow \ellb^2 (\Lb^2 (\M))$ constructed through an alternating cascade of spectral integral operators and nonlinearities.
Let $M: \Lb^2 (\M) \rightarrow \Lb^2 (\M)$ be the modulus operator,
$Mf(x) = |f(x)|,
$
and for each $\gamma\in\Gamma,$ we let 
\begin{equation*}
U_{\gamma} f(x) = M T_{h_{\gamma}} f(x) = |T_{h_{\gamma}} f(x)|. 
\end{equation*}
For $m\geq 1,$ let  $\Gamma_m$ be the set of all  paths of the form
$\vec\gamma=(\gamma_1, \ldots, \gamma_m),$ and 
let $\Gamma_\infty=\cup_{m=1}^\infty \Gamma_m$ denote the set of a all finite paths.
For $\vec{\gamma} \in \Gamma_m,$ $m\geq 1$ let

\begin{equation*}
U_{\vec{\gamma}} f(x) = U_{\gamma_m} \cdots U_{\gamma_1} f(x), \quad \vec{\gamma} = (\gamma_1, \ldots, \gamma_m).
\end{equation*}
We define an operator $U:\Lb^2(\M)\rightarrow \ellb^2 (\Lb^2 (\M))$, called the scattering propagator, by 
\begin{equation*}
U f = \{ U_{\vec{\gamma}}f : \vec{\gamma} \in \Gamma_\infty \}.
\end{equation*}
In a slight abuse of notation, we include the ``empty-path'' of length zero in $\Gamma_\infty,$ and define $U_\gamma$ to be the identity if $\gamma$ is the empty path.

The scattering transform $S_{\vec{\gamma}}$ over a path $\vec{\gamma} \in \Gamma_{\infty}$ is defined as the integration of $U_\gamma$ against the low-pass integral operator $T_g,$ i.e.,
$ 
S_{\vec{\gamma}} f(x) = T_g U_{\vec{\gamma}} f(x)
$ 
. The operator $S :\Lb^2(\M)\rightarrow \ellb^2 (\Lb^2 (\M))$ given by
$$ 
S f = \{ S_{\vec{\gamma}}f : \vec{\gamma} \in \Gamma_\infty \},
$$ 
is referred to as the geometric scattering transform. It is a mathematical model for geometric ConvNets that are constructed out of spectral filters. The following proposition shows that $S$ is nonexpansive if the Littlewood-Paley condition holds with $B=1$, which we assume for the rest of the paper. The proof is nearly identical to Proposition 2.5 of \citet{mallat:scattering2012}, and is thus omitted. 

\begin{proposition}\label{nonexpansive}
If the Littlewood-Paley condition \eqref{LPC} holds with $B=1$, then
\begin{equation*}
\| S f_1 - S f_2 \|_{2,2} \leq \| f_1 - f_2 \|, \quad \forall \, f_1, f_2 \in \Lb^2 (\M).
\end{equation*}
\end{proposition}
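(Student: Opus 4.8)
The plan is to follow the template of Mallat's Proposition~2.5, exploiting that every building block of $S$ is nonexpansive and that the Littlewood--Paley bound $B=1$ forces the ``one-step'' propagator to be nonexpansive as well. First I would record two elementary facts. (a) The modulus operator $M$ is nonexpansive on $\Lb^2(\M)$, since $\bigl|\,|a|-|b|\,\bigr|\le|a-b|$ for all $a,b\in\C$; hence for every $\gamma\in\Gamma$, $\|U_\gamma f_1-U_\gamma f_2\|_2=\|MT_{h_\gamma}f_1-MT_{h_\gamma}f_2\|_2\le\|T_{h_\gamma}(f_1-f_2)\|_2$. (b) By Proposition~\ref{LPprop} with $B=1$, $\|\Phi f\|_{2,2}^2=\|T_g f\|_2^2+\sum_{\gamma\in\Gamma}\|T_{h_\gamma}f\|_2^2\le\|f\|_2^2$. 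Combining (a) and (b), the one-step propagator $\widetilde{U}f:=\{T_g f\}\cup\{U_\gamma f:\gamma\in\Gamma\}$ satisfies
\[
\|\widetilde{U}f_1-\widetilde{U}f_2\|_{2,2}^2
=\|T_g(f_1-f_2)\|_2^2+\sum_{\gamma\in\Gamma}\|U_\gamma f_1-U_\gamma f_2\|_2^2
\le\|\Phi(f_1-f_2)\|_{2,2}^2\le\|f_1-f_2\|_2^2 .
\]

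Second, I would unfold the scattering cascade layer by layer. Write $\mathcal{U}_m(f):=\{U_{\vec{\gamma}}f:\vec{\gamma}\in\Gamma_m\}$ for the collection of depth-$m$ propagated signals, with $\mathcal{U}_0(f)=\{f\}$ via the empty-path convention. By the composition rule $U_{\gamma_{m+1}}U_{\vec{\gamma}}=U_{(\vec{\gamma},\gamma_{m+1})}$ and the definition $S_{\vec{\gamma}}f=T_g U_{\vec{\gamma}}f$, applying $\widetilde{U}$ to each element of $\mathcal{U}_m(f)$ and regrouping yields exactly $\{S_{\vec{\gamma}}f:\vec{\gamma}\in\Gamma_m\}\cup\mathcal{U}_{m+1}(f)$. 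Iterating from $\mathcal{U}_0(f)$, and using that applying a nonexpansive map componentwise and summing the squared $\Lb^2(\M)$ norms preserves nonexpansiveness, after $m$ steps one obtains
\[
\sum_{k=0}^{m-1}\sum_{\vec{\gamma}\in\Gamma_k}\|S_{\vec{\gamma}}f_1-S_{\vec{\gamma}}f_2\|_2^2
+\sum_{\vec{\gamma}\in\Gamma_m}\|U_{\vec{\gamma}}f_1-U_{\vec{\gamma}}f_2\|_2^2
\le\|f_1-f_2\|_2^2 .
\]
Discarding the last (nonnegative) sum and letting $m\to\infty$ gives $\|Sf_1-Sf_2\|_{2,2}^2=\sum_{\vec{\gamma}\in\Gamma_\infty}\|S_{\vec{\gamma}}f_1-S_{\vec{\gamma}}f_2\|_2^2\le\|f_1-f_2\|_2^2$, which is the claim.

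The argument is essentially bookkeeping once the two nonexpansiveness facts are in place, so I do not expect a genuine obstacle. The one point that needs care is the layer-unfolding identity: verifying that iterating $\widetilde{U}$ produces each scattering coefficient $S_{\vec{\gamma}}f$ exactly once — here the convention that the empty path lies in $\Gamma_\infty$ with $U_{\text{empty}}=\id$ is precisely what makes $S_{\text{empty}}f=T_g f$ appear already at the first step — together with the still-propagating depth-$m$ terms, with no overlaps or omissions. One should also note that the intermediate sums over $\bigcup_{k\le m}\Gamma_k$ converge for each fixed $m$, which is automatic from the displayed inequality obtained at step $m$, so the passage $m\to\infty$ and the reindexing into $\sum_{\vec{\gamma}\in\Gamma_\infty}$ are justified.
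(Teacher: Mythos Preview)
Your proof is correct and is precisely the argument the paper has in mind: the paper omits the proof entirely, stating only that it is ``nearly identical to Proposition~2.5 of \citet{mallat:scattering2012},'' and your write-up is exactly that argument transported to the manifold setting.
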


The scattering transform is invariant to the action of the isometry group on the inputted signal $f$ up to a factor that depends upon the decay of the low-pass spectral function $g$. If $|g (\lambda)| \leq C_0 e^{-t \lambda}$ for some constant $C_0$ and $t > 0$ (e.g., the heat kernel), then the following theorem establishes isometric invariance up to the scale $t^d$. 

\begin{theorem} \label{thm: scat isometry invariance}
Let $\zeta \in \Isom (\M)$ and  $|g (\lambda)| \leq C_0 e^{-t \lambda}$ for some constant $C_0$ and $t > 0$. Then there exists a constant $C(\M)<\infty,$ such that
\begin{equation*}
\| S  f - S  V_{\zeta} f \|_{2,2} \leq C_0 C(\M) t^{-d} \| \zeta \|_{\infty} \| U  f \|_{2,2}, \quad \forall \, f \in L^2(\M). 
\end{equation*}
\end{theorem}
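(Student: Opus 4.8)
\emph{Overview.} The plan is to use the isometry equivariance already established to reduce the claim to a single operator-norm estimate on the low-pass operator $T_g$, and then to prove that estimate directly from the spectral expansion of $K_g$ and the hypothesis $|g(\lambda)|\le C_0e^{-t\lambda}$. First, fix $\zeta\in\Isom(\M)$. By Theorem~\ref{invariso} the spectral integral operators $T_g$ and $T_{h_\gamma}$ commute with $V_\zeta$, and the modulus $M$ also commutes with $V_\zeta$ since $|(V_\zeta f)(x)|=(V_\zeta Mf)(x)$; hence $U_{\vec\gamma}V_\zeta=V_\zeta U_{\vec\gamma}$ for every $\vec\gamma\in\Gamma_\infty$, so $S_{\vec\gamma}V_\zeta f=T_gU_{\vec\gamma}V_\zeta f=V_\zeta T_gU_{\vec\gamma}f=V_\zeta S_{\vec\gamma}f$. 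Therefore $S_{\vec\gamma}f-S_{\vec\gamma}V_\zeta f=(I-V_\zeta)T_gU_{\vec\gamma}f$, and summing over paths,
\[
\|Sf-SV_\zeta f\|_{2,2}^2=\sum_{\vec\gamma\in\Gamma_\infty}\bigl\|(I-V_\zeta)T_gU_{\vec\gamma}f\bigr\|_2^2\le\bigl\|(I-V_\zeta)T_g\bigr\|_{2\to2}^2\sum_{\vec\gamma\in\Gamma_\infty}\|U_{\vec\gamma}f\|_2^2=\bigl\|(I-V_\zeta)T_g\bigr\|_{2\to2}^2\,\|Uf\|_{2,2}^2.
\]
So it suffices to show $\|(I-V_\zeta)T_g\|_{2\to2}\le C_0\,C(\M)\,t^{-d}\,\|\zeta\|_\infty$.

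\emph{The key estimate.} The operator $(I-V_\zeta)T_g$ has integral kernel $K_g(x,y)-K_g(\zeta^{-1}(x),y)$. Since $r\bigl(x,\zeta^{-1}(x)\bigr)=r\bigl(\zeta(\zeta^{-1}(x)),\zeta^{-1}(x)\bigr)\le\|\zeta\|_\infty$, integrating $\nabla_1K_g$ along a minimizing geodesic from $x$ to $\zeta^{-1}(x)$ gives $\bigl|K_g(x,y)-K_g(\zeta^{-1}(x),y)\bigr|\le\|\zeta\|_\infty\,\|\nabla_1K_g\|_\infty$, where $\nabla_1$ is the gradient in the first variable and termwise differentiation of the expansion of $K_g$ is legitimate because $|g(\lambda_k)|\le C_0e^{-t\lambda_k}$ decays faster than the polynomial growth in $\lambda_k$ of $\varphi_k$ and $\nabla\varphi_k$. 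Invoking H\"ormander's sup-norm bounds $\|\varphi_k\|_\infty\le C(\M)\lambda_k^{(d-1)/4}$ and $\|\nabla\varphi_k\|_\infty\le C(\M)\lambda_k^{(d+1)/4}$,
\[
\|\nabla_1K_g\|_\infty\le\sum_k|g(\lambda_k)|\,\|\nabla\varphi_k\|_\infty\|\varphi_k\|_\infty\le C_0\,C(\M)\sum_k e^{-t\lambda_k}\lambda_k^{d/2},
\]
and Weyl's law ($\#\{k:\lambda_k\le\lambda\}\sim c_\M\lambda^{d/2}$) makes $t\mapsto t^d\sum_k e^{-t\lambda_k}\lambda_k^{d/2}$ continuous on $(0,\infty)$ with finite limits at $0$ and at $\infty$, hence bounded, so $\sum_k e^{-t\lambda_k}\lambda_k^{d/2}\le C(\M)t^{-d}$ for all $t>0$. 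Thus $\bigl|K_g(x,y)-K_g(\zeta^{-1}(x),y)\bigr|\le C_0\,C(\M)\,t^{-d}\,\|\zeta\|_\infty$ uniformly in $(x,y)$, and since $\vol(\M)<\infty$ Schur's test turns this pointwise kernel bound into $\|(I-V_\zeta)T_g\|_{2\to2}\le C_0\,C(\M)\,t^{-d}\,\|\zeta\|_\infty$. Together with the display above, this proves the theorem.

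\emph{Main obstacle.} Only the key estimate requires real work: one must control the modulus of continuity of $K_g$ in its first variable with the stated $t$-dependence, which needs both a quantitative bound on the size of high-frequency eigenfunctions and their gradients (the H\"ormander estimates above, or equivalently on-diagonal heat-kernel and Gaussian gradient bounds after reducing to the heat semigroup via the factorization $T_g=T_{g(\cdot)e^{t\cdot}}\circ e^{t\Delta}$, noting $\|T_{g(\cdot)e^{t\cdot}}\|_{2\to2}\le C_0$ and that this operator still commutes with $V_\zeta$) and Weyl's law to convert the resulting spectral sum into the power $t^{-d}$. The exponent $t^{-d}$ is a convenient uniform rate rather than a sharp one --- a more careful Hilbert--Schmidt type estimate gives a faster-decaying bound --- but it is exactly what is needed to record isometric invariance up to the scale $t^d$.
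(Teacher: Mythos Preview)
Your proof is correct and follows essentially the same route as the paper: commute $V_\zeta$ past the scattering propagator via Theorem~\ref{invariso} (and the trivial $MV_\zeta=V_\zeta M$) to reduce to bounding $\|(I-V_\zeta)T_g\|$, then control that operator norm through $\|\nabla K_g\|_\infty$ using the H\"ormander eigenfunction estimates and a spectral-sum bound of the form $\sum_k e^{-t\lambda_k}\lambda_k^{d/2}\le C(\M)t^{-d}$. The paper packages the second step as Lemma~\ref{lem: filter output stability} (with Lemma~\ref{lem: kernel grad Linf bound} for $\|\nabla K_\eta\|_\infty$ and a citation to \citet{berard:embedManifoldHeatKer1994} for the spectral sum), whereas you give a self-contained argument via Weyl's law and Schur's test, but the content is the same.
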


\begin{proof}
We rewrite 
$ 
S f - S  V_{\zeta}f = [V_{\zeta}, S  ]f + S f - V_{\zeta} S f. 
$ 
Theorem \ref{invariso} proves that spectral integral operators commute with isometries. Since the modulus operator does as well, it follows that  
$[V_{\zeta}, S  ]f = 0$ and 
\begin{equation*}
\| S f - S  V_{\zeta}f \|_{2,2} = \| S f - V_{\zeta} S f \|_{2,2}.
\end{equation*}
Since $S  = T_g U $, we see that
\begin{align}
\| S f - V_{\zeta} S f \|_{2,2} &= \| T_g U f - V_{\zeta} T_g U f \|_{2,2} 
\leq \| T_g - V_{\zeta} T_g \| \| U f \|_{2,2}, \label{eqn: S to U bound}
\end{align}
Since $|g (\lambda)| \leq C_0 e^{-t \lambda},$ Lemma \ref{lem: filter output stability} stated below shows that 
\begin{equation*}
\| T_g - V_{\zeta} T_g \| \leq C_0 C(\M) t^{-d} \| \zeta \|_{\infty},
\end{equation*}
which completes the proof.
\end{proof}

\begin{lemma} \label{lem: filter output stability}
There exists a constant $C(\M) > 0,$ such that
for every spectral integral operator, $T_{\eta},$  and for every $\zeta \in \Diff (\M)$,
\begin{equation*}
\| T_\eta f - V_{\zeta} T_\eta f \| \leq C (\M) \left(\sum_{k \in \N} \eta (\lambda_k) \lambda_k^{d/2}\right) \| \zeta \|_{\infty}\|f\|_2, \quad \forall \, f\in\Lb^2(\M).
\end{equation*}
Moreover, if 
 $|\eta (\lambda)| \leq C_0 e^{-t \lambda}$ for some constant $C_0$ and some $t > 0$, then there exists a constant $C'(\M) > 0$
such that for any $\zeta \in \Diff (\M)$, 
\begin{equation*}
\| T_\eta f - V_{\zeta} T_\eta f \| \leq C_0C'(\M) t^{-d} \| \zeta \|_{\infty}\|f\|_2, \quad \forall \, f\in\Lb^2(\M).
\end{equation*}
\end{lemma}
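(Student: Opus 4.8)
The plan is to realize $T_\eta - V_\zeta T_\eta$ as an integral operator and bound its norm through a uniform pointwise estimate on the difference of its kernel evaluated at $x$ and at $\zeta^{-1}(x)$. Since $V_\zeta T_\eta f(x) = \int_\M K_\eta(\zeta^{-1}(x),y)\, f(y)\, dV(y)$, the operator $T_\eta - V_\zeta T_\eta$ has kernel $\widetilde K(x,y) = K_\eta(x,y) - K_\eta(\zeta^{-1}(x),y)$. I would bound its operator norm by the Hilbert--Schmidt norm, $\| T_\eta - V_\zeta T_\eta \| \le \| \widetilde K \|_{\Lb^2(\M\times\M)} \le \vol(\M)\,\sup_{x,y\in\M} |\widetilde K(x,y)|$, so that the whole estimate reduces to a uniform pointwise bound on $\widetilde K$.

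To estimate $\widetilde K$, fix $y$ and use that $\M$, being compact and connected, admits a minimizing unit-speed geodesic from $\zeta^{-1}(x)$ to $x$ of length $r(x,\zeta^{-1}(x))$; integrating $\nabla_z K_\eta(z,y)$ along it gives $|\widetilde K(x,y)| \le r(x,\zeta^{-1}(x))\, \sup_{z\in\M}\|\nabla_z K_\eta(z,y)\|$. Writing $w=\zeta^{-1}(x)$, we have $r(x,w)=r(\zeta(w),w)\le\|\zeta\|_\infty$, so it remains to control $\sup_{z,y}\|\nabla_z K_\eta(z,y)\|$. Differentiating the series \eqref{eqn: convolution kernel} term by term (justified by uniform convergence of the differentiated series whenever the asserted bound is finite, which is the only case of interest) yields $\|\nabla_z K_\eta(z,y)\| \le \sum_{k\in\N} |\eta(\lambda_k)|\,\|\nabla\varphi_k\|_\infty\,\|\varphi_k\|_\infty$. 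Here I would invoke the classical $\Lb^\infty$ eigenfunction estimates on compact manifolds, $\|\varphi_k\|_\infty \le C(\M)(1+\lambda_k)^{(d-1)/4}$ and $\|\nabla\varphi_k\|_\infty \le C(\M)(1+\lambda_k)^{(d+1)/4}$, so the product grows like $(1+\lambda_k)^{d/2}$; since $\varphi_0$ is constant its gradient vanishes, killing the $\lambda_k=0$ term and matching the $\lambda_k^{d/2}$ weight in the statement. Chaining the three displays and multiplying by $\|f\|_2$ gives the first inequality with $C(\M)=C'(\M)\vol(\M)$.

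For the ``moreover'' part, substitute $|\eta(\lambda_k)| \le C_0 e^{-t\lambda_k}$ into $\sum_k |\eta(\lambda_k)|(1+\lambda_k)^{d/2} \le C_0\sum_k e^{-t\lambda_k}(1+\lambda_k)^{d/2}$ and bound this heat-type series by $C'(\M)\,t^{-d}$ uniformly in $t>0$. For $t\le 1$ this follows from Weyl's law, equivalently the short-time heat-trace asymptotics applied to $\sum_k e^{-t\lambda_k}$ and $\sum_k e^{-t\lambda_k}\lambda_k^{d/2}$, whose leading orders are $t^{-d/2}$ and $t^{-d}$. For $t\ge1$ one uses the crude bound $\sum_{k\ge1} e^{-t\lambda_k}(1+\lambda_k)^{d/2} \le e^{-(t-1)\lambda_1}\sum_{k\ge1} e^{-\lambda_k}(1+\lambda_k)^{d/2}$ together with $\sup_{t\ge1} t^d e^{-(t-1)\lambda_1} < \infty$.

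The only genuinely nontrivial ingredients — and hence the main obstacle — are the classical sup-norm bounds for Laplace--Beltrami eigenfunctions and their gradients (Hörmander/Sogge-type estimates), which are exactly what force the polynomial weight $\lambda_k^{d/2}$, and the uniform-in-$t$ heat-trace bound $\sum_k e^{-t\lambda_k}(1+\lambda_k)^{d/2}\lesssim t^{-d}$, whose small-$t$ regime is Weyl's law while the large-$t$ regime needs the separate exponential estimate above. Everything else (passage to the Hilbert--Schmidt norm, the geodesic mean value estimate, and term-by-term differentiation under the convergence hypothesis) is routine.
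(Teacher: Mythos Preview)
Your proposal is correct and follows essentially the same approach as the paper: both realize $T_\eta - V_\zeta T_\eta$ as an integral operator, bound it by $\vol(\M)\,\|\nabla K_\eta\|_\infty\,\|\zeta\|_\infty$ via a Lipschitz/mean-value estimate, control $\|\nabla K_\eta\|_\infty$ through the H\"ormander and Shi--Xu eigenfunction bounds $\|\varphi_k\|_\infty\lesssim\lambda_k^{(d-1)/4}$, $\|\nabla\varphi_k\|_\infty\lesssim\lambda_k^{(d+1)/4}$, and then estimate the weighted heat sum $\sum_k\lambda_k^{d/2}e^{-t\lambda_k}$. The only cosmetic differences are that the paper packages the gradient bound as a separate lemma (Lemma~\ref{lem: kernel grad Linf bound}) and cites a single uniform-in-$t$ result of B\'erard--Besson--Gallot for the heat-type sum in place of your small-$t$/large-$t$ case split.
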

We give the proof of Lemma \ref{lem: filter output stability} in Appendix~\ref{sec: pffilteroutputstability}.

\section{Stability to Diffeomorphisms}
\label{sec: stability}

In the previous section, we showed that the scattering transform is invariant to the action of isometries up to a factor depending on the scale $t.$
In \citet{mallat:scattering2012}, it is shown that the Euclidean scattering transform 
is stable to the action of certain diffeomorphisms that are close to being translations. In an effort to prove analogous results for the geometric scattering transform, we introduce three quantities $A_1, A_2,$ and $A_3$, which measure how far away a diffeomorphism  is from being an isometry. If $\zeta$ is an isometry, then by definition  $r(\zeta(x),\zeta(y))=r(x,y)$ for all $x$ and $y$ in $\M$. This motivates us to define 
\begin{equation}\label{a1}
A_1 (\zeta) \coloneqq \sup_{\substack{x,y \in \M \\ x \neq y}} \left|\frac{r\left(\zeta (x), \zeta (y)\right) - r(x,y)}{r(x,y)}\right|.
\end{equation}
It is also known \citep[see for example,][]{kobayashi1996foundations} that isometries are volume preserving in the sense that $|\det[D\zeta (x)]|=1$ for all $x\in\M.$ Therefore,  we introduce the quantity 
\begin{equation}\label{a2}
A_2(\zeta) \coloneqq \left(\sup_{x \in \M} \Big||\det[D\zeta (x)]| - 1\Big|\right)\left(\sup_{x\in\M} \Big|\det[D\zeta^{-1}(x)]\Big|\right),
\end{equation}
which measures how much a $\zeta$ distorts volumes. Lastly, 
\begin{equation*}
A_3(\zeta) \coloneqq \inf\{\sup_x\{r(\zeta(x),\zeta_1(x))\}: \zeta_1\in \Isom(\M)\}
\end{equation*}
 measures the distance from $\zeta$ to the isomtery group in the $\sup$ norm. We note that 
\begin{align*}
A_3(\zeta) &= \inf\{\sup_x\{r\left(\zeta(x),\zeta_1(x)\right)\}: \zeta_1\in \Isom(\M)\} \\
&=\inf\{\sup_x\{r\left((\zeta^{-1}_1\circ\zeta)(x),x\right)\}: \zeta_1\in \Isom(\M)\} \\
&= \inf\left\{\|\zeta^{-1}_1\circ\zeta\|_\infty:  \zeta_1\in \Isom(\M)\right\} \\
&= \inf\left\{\|\zeta_2\|_\infty: \zeta=\zeta_1\circ\zeta_2, \, \zeta_1\in \Isom(\M)\right\}.
\end{align*}
Thus,  $A_3(\zeta)$ will be small if $\zeta$ can be factored into a global isometry $\zeta_1$ and a small perturbation~$\zeta_2.$ 

 A key step to showing that the Euclidean scattering transform $S_{\Euc}$ is stable to the action of certain diffeomorphisms is a bound on the commutator norm $\| [S_{\Euc} , V_{\zeta}] f\|_{2,2}.$ This bound is then combined with a bound on
$ 
\| S_{\Euc}  f -  V_{\zeta} S_{\Euc}  f \|_{2,2},
$ 
which is obtained by methods similar to those used to prove Theorem \ref{thm: scat isometry invariance}, and the triangle inequality to produce a bound on $\| S_{\Euc}  f -  V_{\zeta} S_{\Euc}  f \|_{2,2}$. 
This motivates us to study the commutator of spectral integral operators with $V_\zeta$ for diffeomorphisms that are close to being isometries. To this end, in the two theorems below, we show that $ \| [T_{\eta}, V_{\zeta}] \|$ can be controlled in terms of the quantities $A_1$, $A_2$, and $A_3$, as well $\eta$-dependent quantities.

\begin{theorem}\label{thm: A3filterstability}
There exists a constant $C (\M) > 0$ such that if $T_\eta$ is a spectral integral operator and  $\zeta \in \Diff (\M)$, then for all $f\in\Lb^2(\M)$,

\begin{align}\label{singlefilt}
\| [T_{\eta}, V_{\zeta}]f\|_2 
&\leq C (\M) \left[   \left( \sum_{k \in \N} \eta (\lambda_k)^2 \right)^{\frac{1}{2}} A_2(\zeta) +\left(\sum_{k \in \N} \eta (\lambda_k) \lambda_k^{d/2} \right)A_3(\zeta)\right] \|f\|_2 .
\end{align}
\end{theorem}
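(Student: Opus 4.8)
The plan is to peel an isometric factor off of $\zeta$, reducing to a diffeomorphism that is $C^0$-close to the identity, and then to estimate the remaining commutator by an explicit computation with the spectral kernel $K_\eta$. For the reduction, fix $\epsilon>0$ and use the identity $A_3(\zeta)=\inf\{\|\zeta_2\|_\infty:\zeta=\zeta_1\circ\zeta_2,\ \zeta_1\in\Isom(\M)\}$ derived in the text to write $\zeta=\zeta_1\circ\zeta_2$ with $\zeta_1\in\Isom(\M)$ and $\|\zeta_2\|_\infty\le A_3(\zeta)+\epsilon$. Since $V_{\zeta_1\circ\zeta_2}=V_{\zeta_1}V_{\zeta_2}$ and Theorem~\ref{invariso} gives $T_\eta V_{\zeta_1}=V_{\zeta_1}T_\eta$, one gets $[T_\eta,V_\zeta]=V_{\zeta_1}[T_\eta,V_{\zeta_2}]$; because $\zeta_1$ is an isometry it preserves the Riemannian volume, so $V_{\zeta_1}$ is unitary on $\Lb^2(\M)$ and $\|[T_\eta,V_\zeta]f\|_2=\|[T_\eta,V_{\zeta_2}]f\|_2$. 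The chain rule together with $|\det D\zeta_1|\equiv1$ also gives $A_2(\zeta_2)=A_2(\zeta)$. Thus it suffices to prove the bound for $\zeta_2$, with $\|\zeta_2\|_\infty$ playing the role of $A_3(\zeta)$, and to let $\epsilon\to0$ at the end.

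For the core estimate, changing variables $y=\zeta_2(z)$ in $T_\eta V_{\zeta_2}f$ shows that $[T_\eta,V_{\zeta_2}]f$ is the integral operator with kernel $K_\eta(x,\zeta_2(z))\,|\det D\zeta_2(z)|-K_\eta(\zeta_2^{-1}(x),z)$, which I would split as
\[
K_\eta(x,\zeta_2(z))\big(|\det D\zeta_2(z)|-1\big)\;+\;\big(K_\eta(x,\zeta_2(z))-K_\eta(\zeta_2^{-1}(x),z)\big)
\]
and bound the two pieces separately. Undoing the change of variables, the first piece is $T_\eta$ composed with multiplication by $|\det D\zeta_2(\cdot)|-1$ and with the reparametrization $g\mapsto g(\zeta_2^{-1}(\cdot))\,|\det D\zeta_2^{-1}(\cdot)|$, whose $\Lb^2$ operator norm is at most $\big(\sup_w|\det D\zeta_2^{-1}(w)|\big)^{1/2}$; using $\|T_\eta\|=\|\eta\|_\infty\le\big(\sum_k\eta(\lambda_k)^2\big)^{1/2}$ (each eigenvalue occurs with multiplicity at least one in that sum) together with the elementary fact that $\sup_w|\det D\zeta_2^{-1}(w)|\ge1$ (its integral over $\M$ equals $\vol(\M)$), this piece has operator norm at most $\big(\sum_k\eta(\lambda_k)^2\big)^{1/2}A_2(\zeta_2)$.

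For the second piece I would invoke the Lipschitz bound on the spectral kernel, $|K_\eta(x,y)-K_\eta(x',y')|\le C(\M)\big(\sum_k\eta(\lambda_k)\lambda_k^{d/2}\big)\big(r(x,x')+r(y,y')\big)$, which is the estimate that underlies Lemma~\ref{lem: filter output stability}; combined with $r(x,\zeta_2^{-1}(x))\le\|\zeta_2\|_\infty$ and $r(\zeta_2(z),z)\le\|\zeta_2\|_\infty$ it bounds the kernel of the second piece pointwise by $2C(\M)\big(\sum_k\eta(\lambda_k)\lambda_k^{d/2}\big)\|\zeta_2\|_\infty$ on the compact set $\M\times\M$, and a Hilbert--Schmidt (equivalently Schur) estimate converts this into an operator-norm bound at the cost of another factor of $\vol(\M)$. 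Adding the two pieces, recalling $A_2(\zeta_2)=A_2(\zeta)$ and $\|\zeta_2\|_\infty\le A_3(\zeta)+\epsilon$, and letting $\epsilon\to0$ gives \eqref{singlefilt} for a constant $C(\M)$ that absorbs the Lipschitz constant and $\vol(\M)$.

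The step I expect to be the real obstacle is the Lipschitz estimate on $K_\eta$ carrying the weight $\lambda_k^{d/2}$: it rests on H\"ormander-type uniform bounds for $\|\varphi_k\|_\infty$ and $\|\nabla\varphi_k\|_\infty$ in powers of $\lambda_k$, and on Weyl's law to absorb the eigenvalue multiplicities into the series $\sum_k\eta(\lambda_k)\lambda_k^{d/2}$. Since this is precisely the technical heart already established for Lemma~\ref{lem: filter output stability}, in practice I would quote it rather than redo it; everything else is bookkeeping with changes of variables and Jacobian factors.
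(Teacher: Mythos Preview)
Your argument is correct and is essentially the paper's proof with the pieces inlined rather than packaged into lemmas. The paper applies Lemma~\ref{lem: isosmudge kernel commutator} (which itself performs the factorization $\zeta=\zeta_1\circ\zeta_2$ at the kernel level and does the Jacobian/kernel-difference splitting) together with Lemma~\ref{lem: kernel grad Linf bound} for the gradient bound $\|\nabla K_\eta\|_\infty\le C(\M)\sum_k\eta(\lambda_k)\lambda_k^{d/2}$ and the identity $\|K_\eta\|_{\Lb^2(\M\times\M)}=\big(\sum_k\eta(\lambda_k)^2\big)^{1/2}$; you instead pull the isometric factor out first at the operator level via $[T_\eta,V_\zeta]=V_{\zeta_1}[T_\eta,V_{\zeta_2}]$ and then carry out the same splitting, bounding the Jacobian piece through the operator norm $\|T_\eta\|=\|\eta\|_\infty$ rather than the Hilbert--Schmidt norm of $K_\eta$. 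Both routes are the same argument up to bookkeeping, and your identification of the gradient estimate as the technical heart (handled in the paper by Lemma~\ref{lem: kernel grad Linf bound}, resting on H\"ormander's local Weyl law and the eigenfunction gradient bound of Shi--Xu) is exactly right.
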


\noindent On certain manifolds, we can prove a modified version of Theorem \ref{thm: A3filterstability} 
that replaces the term $A_3(\zeta)$ with $A_1(\zeta).$ 

\begin{definition}
A manifold $\M$ is said to be {\it{homogeneous}} (with respect to the action of the isometry group), if for any two  points, $x, y \in \M$, there exists an isometry $\tilde{\zeta}:\M\rightarrow\M$ such that $\tilde{\zeta}(x)=y.$
\end{definition}

\begin{definition}
A manifold $\M$ is said to be {\it{two-point homogeneous}}, if for any two pairs of points, $(x_1,x_2), ~(y_1,y_2)$ such that $r(x_1,x_2)=r(y_1,y_2),$ there exists an isometry $\tilde{\zeta}:\M\rightarrow\M$ such that $\tilde{\zeta}(x_1)=y_1$ and $\tilde{\zeta}(x_2)=y_2.$ 
\end{definition}

\begin{theorem}\label{twoptho}
If $\M$ is two-point homogeneous, then there exists a constant $C(\M)>0$ such that if  $T_{\eta}$ is spectral integral operator and $\zeta \in \Diff (\M)$, then for all $f\in\Lb^2(\M)$,
\begin{align*}\label{singlefilt}
\| [T_{\eta}, V_{\zeta}]f\|_2 
 \leq C(\M) \left[ A_1(\zeta)\left(\sum_{k \in \N} \eta (\lambda_k) \lambda_k^{(d+1)/4}\right)  +A_2(\zeta)\left( \sum_{k \in \N} \eta (\lambda_k)^2 \right)^{\frac{1}{2}}   \right] \|f\|_2 .
\end{align*}
\end{theorem}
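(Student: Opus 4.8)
The plan is to estimate the operator norm $\|[T_\eta,V_\zeta]\|$ by a Schur test on the integral kernel of the commutator, in close parallel to the proof of Theorem~\ref{thm: A3filterstability}; the extra structure afforded by two-point homogeneity is precisely what allows the distortion term to be controlled by $A_1$ rather than $A_3$. Writing $[T_\eta,V_\zeta]f(x)$ out as an integral and substituting $y=\zeta(w)$ in the $T_\eta V_\zeta f$ term, one sees that $[T_\eta,V_\zeta]$ has kernel $L(x,w)=K_\eta(x,\zeta(w))\,|\det D\zeta(w)|-K_\eta(\zeta^{-1}(x),w)$, which I split as $L=L_{\mathrm{vol}}+L_{\mathrm{dist}}$ with $L_{\mathrm{vol}}(x,w)=K_\eta(x,\zeta(w))\bigl(|\det D\zeta(w)|-1\bigr)$ and $L_{\mathrm{dist}}(x,w)=K_\eta(x,\zeta(w))-K_\eta(\zeta^{-1}(x),w)$. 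The key structural input is that on a two-point homogeneous manifold $K^{(\lambda)}(x,y)=\phi_\lambda(r(x,y))$ for functions $\phi_\lambda$ on $[0,\diam(\M)]$, hence $K_\eta(x,y)=\Psi_\eta(r(x,y))$ with $\Psi_\eta=\sum_\lambda\eta(\lambda)\phi_\lambda$: by the computation in the proof of Theorem~\ref{invariso}, $K^{(\lambda)}(\tilde\zeta(x),\tilde\zeta(y))=K^{(\lambda)}(x,y)$ for every isometry $\tilde\zeta$, and two-point homogeneity supplies an isometry taking any pair $(x,y)$ to any pair $(x',y')$ with $r(x,y)=r(x',y')$.

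For $L_{\mathrm{vol}}$ I would bound both Schur integrals $\sup_x\int_\M|L_{\mathrm{vol}}|\,dV(w)$ and $\sup_w\int_\M|L_{\mathrm{vol}}|\,dV(x)$ by $A_2(\zeta)\,(\sum_{k}\eta(\lambda_k)^2)^{1/2}$. Substituting $y=\zeta(w)$ in the first integral produces a factor $|\det D\zeta^{-1}(y)|$ alongside $\bigl||\det D\zeta(w)|-1\bigr|$, and what remains is $\sup_x\int_\M|K_\eta(x,y)|\,dV(y)$; by Cauchy--Schwarz and the homogeneity identity $\sum_{\lambda_k=\lambda}|\varphi_k(x)|^2=m(\lambda)/\vol(\M)$, which gives $\int_\M|K_\eta(x,y)|^2\,dV(y)=\vol(\M)^{-1}\sum_k\eta(\lambda_k)^2$, this is at most $(\sum_k\eta(\lambda_k)^2)^{1/2}$. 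The second Schur integral is handled the same way, using that $\int_\M|\det D\zeta^{-1}|\,dV=\vol(\M)$ forces $\sup_\M|\det D\zeta^{-1}|\ge1$, so that $\sup_\M\bigl||\det D\zeta|-1\bigr|\le A_2(\zeta)$. Schur's test then bounds the operator with kernel $L_{\mathrm{vol}}$ by $C(\M)\,A_2(\zeta)(\sum_k\eta(\lambda_k)^2)^{1/2}$.

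For $L_{\mathrm{dist}}$, writing $u=\zeta^{-1}(x)$ gives $L_{\mathrm{dist}}(x,w)=\Psi_\eta(r(\zeta(u),\zeta(w)))-\Psi_\eta(r(u,w))$, so the mean value theorem together with the definition of $A_1$ yields $|L_{\mathrm{dist}}(x,w)|\le\|\Psi_\eta'\|_\infty\,A_1(\zeta)\,r(u,w)$; after the substitution $x=\zeta(u)$, under which $\int_\M|\det D\zeta(u)|\,dV(u)=\vol(\M)$, both Schur integrals are at most $\|\Psi_\eta'\|_\infty A_1(\zeta)\,\diam(\M)\vol(\M)$. It then remains to prove $\|\Psi_\eta'\|_\infty\le C(\M)\sum_k\eta(\lambda_k)\lambda_k^{(d+1)/4}$. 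Since $\nabla_yK^{(\lambda)}(x,y)=\phi_\lambda'(r(x,y))\,\nabla_y r(x,y)$ and $|\nabla_y r|\equiv1$ off the cut locus, it suffices to bound $|\nabla_yK^{(\lambda)}(x,y)|=\bigl|\sum_{\lambda_k=\lambda}\varphi_k(x)\overline{\nabla\varphi_k(y)}\bigr|$ by $C(\M)m(\lambda)\lambda^{(d+1)/4}$: pull out $\sup_{\lambda_k=\lambda}\|\nabla\varphi_k\|_\infty$, estimate $\sum_{\lambda_k=\lambda}|\varphi_k(x)|\le m(\lambda)^{1/2}\bigl(m(\lambda)/\vol(\M)\bigr)^{1/2}$, and invoke the classical sup-norm bound $\|\nabla\varphi_k\|_\infty\le C(\M)\lambda_k^{(d+1)/4}$ on gradients of Laplace eigenfunctions (H\"ormander/Sogge-type spectral estimates); then $\|\phi_\lambda'\|_\infty\le C(\M)m(\lambda)\lambda^{(d+1)/4}$, and summing over $\lambda$ with weights $\eta(\lambda)$ gives the claim since $\sum_\lambda m(\lambda)g(\lambda)=\sum_k g(\lambda_k)$. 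Adding the two Schur bounds completes the proof. I expect the main obstacle to be this last estimate on $\|\Psi_\eta'\|_\infty$: the exponent $(d+1)/4$ is dictated by the sharp eigenfunction gradient bound and the factor $m(\lambda)$ by the homogeneity identity, and one must be somewhat careful near the cut locus, treating $\Psi_\eta$ as a Lipschitz function of the single variable $r$ rather than differentiating $K_\eta$ pointwise everywhere. The change-of-variables identities and the two Schur tests are routine, and both spectral sums may be assumed finite since otherwise the asserted bound is vacuous.
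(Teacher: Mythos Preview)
Your proof is correct and follows essentially the same route as the paper: identify $K_\eta$ as radial via two-point homogeneity, split the commutator kernel into a volume-distortion piece controlled by $A_2$ and a metric-distortion piece controlled by $A_1$ through the mean value theorem, and bound $\|\nabla K_\eta\|_\infty$ using the homogeneous addition formula $\sum_{\lambda_k=\lambda}|\varphi_k|^2\equiv C(\M)m(\lambda)$ together with the H\"ormander/Shi--Xu eigenfunction gradient estimates to obtain the $\lambda^{(d+1)/4}$ exponent. The only methodological difference is that you estimate the operator norms of the two kernel pieces via the Schur test, whereas the paper uses Cauchy--Schwarz with the $\Lb^2(\M\times\M)$ (Hilbert--Schmidt) norm of the kernel; both routes give the same final bound up to manifold-dependent constants.
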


The proofs of Theorems \ref{thm: A3filterstability} and \ref{twoptho} are in Appendices \ref{sec: pfA3filterstability} and \ref{sec: pftwoptho}.  The keys to these proofs are   Lemmas \ref{lem: isosmudge kernel commutator}, \ref{lem: dist kernel commutator}, and  \ref{lem: kernel grad Linf bound} below, as well as the fact that spectral kernels are radial on two-point homogeneous manifolds. For proofs of these lemmas, please see Appendices \ref{sec: pfisosmudgekernelcommuatator}, \ref{sec: pfdistkernelcommmutator}, and \ref{sec: pfkernelgradLinfbound}. We note that the assumption that $K$ is radial, made in Lemma \ref{lem: dist kernel commutator}, is equivalent to the assumption (\ref{eqn: isoaction}), made in Lemma \ref{lem: isosmudge kernel commutator}, on two-point homogeneous manifolds, but is a stronger assumption in general.

\begin{lemma} \label{lem: isosmudge kernel commutator}
There exists a constant $C(\M)>0$ such that if $\zeta \in \Diff (\M)$ and  $T$ is a kernel integral operator on $\Lb^2(\M)$ with a kernel that is invariant to the action of isometries, i.e.,
\begin{equation}\label{eqn: isoaction}
    K\left(\tilde{\zeta}(x),\tilde{\zeta}(y)\right)=K(x,y), \quad \forall \, \tilde{\zeta} \in \Isom (\M), 
\end{equation}
then 
\begin{equation*}
\| [T,V_\zeta] f \|_2 \leq C(\M)  \left[   \| K \|_{\Lb^2 (\M \times \M)} A_2(\zeta) +\| \nabla K \|_{\infty} A_3(\zeta)\right] \|f\|_2, \quad \forall \, f\in\Lb^2(\M).
\end{equation*}

\end{lemma}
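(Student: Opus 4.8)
The plan is to estimate the kernel of the commutator $[T, V_\zeta]$ directly and then bound the resulting integral operator by its Hilbert--Schmidt norm. First I would compute that $[T, V_\zeta] = T V_\zeta - V_\zeta T$ has kernel $\widetilde K(x,y) = |\det[D\zeta^{-1}(y)]|\, K(x, \zeta^{-1}(y)) - K(\zeta^{-1}(x), y)$; the Jacobian factor appears from the change of variables needed to express $T V_\zeta$ as an integral operator against $y$. To exploit the hypothesis \eqref{eqn: isoaction}, I would pick (for a given $\varepsilon > 0$) an isometry $\zeta_1 \in \Isom(\M)$ achieving $\|\zeta_1^{-1} \circ \zeta\|_\infty \leq A_3(\zeta) + \varepsilon$, write $\zeta = \zeta_1 \circ \zeta_2$ with $\|\zeta_2\|_\infty$ small, and use $K(\zeta_1^{-1} a, \zeta_1^{-1} b) = K(a,b)$ to replace $K(\zeta^{-1}(x), y)$ by $K(\zeta_2^{-1}(x'), y')$ where $x' = \zeta_1^{-1}(x)$, $y' = \zeta_1^{-1}(y)$ (using that $\zeta_1$ preserves the volume measure when changing variables in the $\Lb^2(\M\times\M)$ norm).

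The kernel $\widetilde K$ then splits naturally into two pieces. The first piece, controlled by how far $\zeta_2$ moves points, is handled by the fundamental theorem of calculus along geodesics: $|K(\zeta_2^{-1}(x'), y') - K(x', y')| \leq \|\nabla K\|_\infty \, r(\zeta_2^{-1}(x'), x') \leq \|\nabla K\|_\infty \|\zeta_2\|_\infty \leq \|\nabla K\|_\infty (A_3(\zeta) + \varepsilon)$, and integrating the square of this over $\M \times \M$ contributes the $\|\nabla K\|_\infty A_3(\zeta)$ term (up to a $C(\M)$ from $\vol(\M)$). The second piece comes from the Jacobian discrepancy: $\bigl||\det[D\zeta^{-1}(y)]| - 1\bigr| \cdot |K(x,\zeta^{-1}(y))|$; factoring $D\zeta^{-1} = D\zeta_2^{-1} \cdot (D\zeta_1^{-1}\circ\zeta_2^{-1})$, using that $|\det D\zeta_1^{-1}| = 1$, and bounding $\bigl||\det D\zeta_2^{-1}| - 1\bigr|$ in terms of $A_2$ — here one wants to relate $A_2(\zeta_2)$ to $A_2(\zeta)$, which holds since $\zeta_1$ is volume-preserving — gives a factor of $A_2(\zeta)$ times $\|K(x,\zeta^{-1}(\cdot))\|_{\Lb^2(\M\times\M)}$, and the latter equals $\|K\|_{\Lb^2(\M\times\M)}$ after a volume-preserving change of variables because $\zeta_1$ is an isometry (though for a general diffeomorphism $\zeta_2$ this substitution produces another Jacobian, which is why the $\sup_x |\det D\zeta^{-1}(x)|$ factor sits inside the definition of $A_2$ in \eqref{a2}). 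Combining the two pieces via the triangle inequality in $\Lb^2(\M\times\M)$, bounding the operator norm by the Hilbert--Schmidt norm of $\widetilde K$, and letting $\varepsilon \to 0$ yields the claimed inequality.

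I expect the main obstacle to be the careful bookkeeping of Jacobian determinants when performing the various changes of variables: one must track that $\zeta_1$ being an isometry lets certain substitutions pass through cleanly while the non-isometric part $\zeta_2$ introduces exactly the volume-distortion factors that $A_2(\zeta)$ is designed to absorb — and crucially that the product structure in the definition \eqref{a2} of $A_2$, namely $\bigl(\sup_x ||\det D\zeta(x)| - 1|\bigr)\bigl(\sup_x |\det D\zeta^{-1}(x)|\bigr)$, is precisely what makes the estimate close regardless of how the factorization $\zeta = \zeta_1\circ\zeta_2$ is chosen. A secondary technical point is justifying that $\|\nabla K\|_\infty$ and $\|K\|_{\Lb^2(\M\times\M)}$ are finite and that differentiating $K$ along geodesics is legitimate, but since in the application $K$ is a spectral kernel these follow from smoothness of the eigenfunctions and are not the conceptual heart of the argument.
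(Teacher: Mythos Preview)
Your approach matches the paper's: split the commutator kernel into a Jacobian-discrepancy piece (giving the $A_2$ term directly --- no need to pass through $A_2(\zeta_2)$, and in any case $|\det D\zeta|=|\det D\zeta_2|$ since $\zeta_1$ is volume-preserving) and a kernel-difference piece handled via the factorization $\zeta=\zeta_1\circ\zeta_2$ with the gradient bound (giving the $A_3$ term), then control the operator by the Hilbert--Schmidt norm. Two small slips to fix: the kernel of $TV_\zeta$ is $K(x,\zeta(y))\,|\det D\zeta(y)|$ (your change of variables went the wrong direction), and after the isometry reduction the remaining difference $K(x',\zeta_2(y))-K(\zeta_2^{-1}(x'),y)$ moves \emph{both} arguments, so the paper inserts the midpoint $K(x',y)$ and applies the gradient bound twice rather than once.
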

\begin{lemma} \label{lem: dist kernel commutator}
 There exists a constant $C(\M)>0$ such that if $T$ is a kernel integral operator with a radial kernel  $K(x,y) = \kappa (r(x,y)),$  $\kappa \in \Cb^1 (\R),$ and $\zeta \in \Diff (\M)$, then
\begin{equation*}
\| [T,V_\zeta] f \|_2 \leq C(\M) \left[ \| \nabla K \|_{\infty} A_1(\zeta) +\| K \|_{\Lb^2 (\M \times \M)} A_2(\zeta) \right] \|f\|_2, \quad \forall \, f\in\Lb^2(\M).
\end{equation*}
\end{lemma}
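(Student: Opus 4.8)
The plan is to realize $[T,V_\zeta]$ as an integral operator, split its kernel into two pieces that isolate $A_2(\zeta)$ and $A_1(\zeta)$, and bound each piece through its Hilbert--Schmidt norm, which dominates the operator norm. Writing $Tg(x)=\int_\M K(x,y)g(y)\,dV(y)$, the substitution $y=\zeta(z)$ in $TV_\zeta f$ together with $V_\zeta Tf(x)=Tf(\zeta^{-1}(x))$ gives
\begin{equation*}
[T,V_\zeta]f(x)=\int_\M \widetilde K(x,z)\,f(z)\,dV(z),\qquad \widetilde K(x,z)=K(x,\zeta(z))\,|\det D\zeta(z)|-K(\zeta^{-1}(x),z).
\end{equation*}
I would then decompose $\widetilde K=\widetilde K_1+\widetilde K_2$ with $\widetilde K_1(x,z)=K(x,\zeta(z))\big(|\det D\zeta(z)|-1\big)$ and $\widetilde K_2(x,z)=K(x,\zeta(z))-K(\zeta^{-1}(x),z)$, so that the volume-distortion error and the metric-distortion error are cleanly separated. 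This parallels the proof of Lemma~\ref{lem: isosmudge kernel commutator}, with radiality playing the role that isometry invariance plays there.

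For $\widetilde K_1$, pulling $\sup_z\big||\det D\zeta(z)|-1\big|$ out of $\|\widetilde K_1\|_{\Lb^2(\M\times\M)}^2$ and then substituting $y=\zeta(z)$ in the remaining double integral $\int_\M\int_\M|K(x,\zeta(z))|^2\,dV(x)\,dV(z)$ produces the extra factor $\sup_y|\det D\zeta^{-1}(y)|$. Since $\int_\M|\det D\zeta^{-1}|\,dV=\vol(\M)$ forces $\sup_y|\det D\zeta^{-1}(y)|\ge 1$, one may replace the square root of this factor by the factor itself, obtaining $\|\widetilde K_1\|_{\Lb^2(\M\times\M)}\le A_2(\zeta)\,\|K\|_{\Lb^2(\M\times\M)}$. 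This is exactly the $A_2$-term appearing in Lemma~\ref{lem: isosmudge kernel commutator}.

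For $\widetilde K_2$ I would use radiality. With $w=\zeta^{-1}(x)$ and $K(x,y)=\kappa(r(x,y))$, we have $\widetilde K_2(x,z)=\kappa\big(r(\zeta(w),\zeta(z))\big)-\kappa\big(r(w,z)\big)$, so the mean value theorem combined with the defining bound $|r(\zeta(w),\zeta(z))-r(w,z)|\le A_1(\zeta)\,r(w,z)\le A_1(\zeta)\,\diam(\M)$ yields the pointwise estimate $|\widetilde K_2(x,z)|\le \|\kappa'\|_{\Lb^\infty[0,\diam\M]}\,\diam(\M)\,A_1(\zeta)$. Since $\nabla_x K(x,y)=\kappa'(r(x,y))\,\nabla_x r(x,y)$ with $|\nabla_x r|\equiv 1$ off the cut locus, $\|\kappa'\|_{\Lb^\infty[0,\diam\M]}$ is controlled by $\|\nabla K\|_\infty$ (up to a dimensional constant), and hence $\|\widetilde K_2\|_{\Lb^2(\M\times\M)}\le C(\M)\,\|\nabla K\|_\infty\,A_1(\zeta)$ with $C(\M)$ of order $\diam(\M)\vol(\M)$.

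Adding the two bounds and using $\|[T,V_\zeta]f\|_2\le \|\widetilde K\|_{\Lb^2(\M\times\M)}\,\|f\|_2$ gives the statement after enlarging $C(\M)$. The argument is essentially mechanical; the points needing care are the bookkeeping of Jacobian factors through the changes of variables and making the first term emerge with precisely the product $(\sup||\det D\zeta|-1|)(\sup|\det D\zeta^{-1}|)$ that defines $A_2(\zeta)$ rather than a weaker quantity, which is exactly where $\sup|\det D\zeta^{-1}|\ge 1$ is invoked. A further subtlety, ultimately harmless, is that the geodesic distance $r$ is only Lipschitz (smooth away from the cut locus): the $\widetilde K_2$ estimate uses only the scalar $\Cb^1$ regularity of $\kappa$ and the definition of $A_1$, and the comparison $\|\kappa'\|_\infty\lesssim\|\nabla K\|_\infty$ needs only that $|\nabla r|=1$ almost everywhere together with continuity of $\kappa'$ and density of the image of $r$ in $[0,\diam\M]$.
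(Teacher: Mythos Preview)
Your proposal is correct and follows essentially the same route as the paper's proof. Both decompose the commutator kernel into a Jacobian piece yielding the $A_2(\zeta)\|K\|_{\Lb^2(\M\times\M)}$ term (via the same change of variables) and a metric-distortion piece handled through radiality, the mean value theorem for $\kappa$, and the identification $\|\kappa'\|_\infty=\|\nabla K\|_\infty$; your added remarks on $\sup|\det D\zeta^{-1}|\ge 1$ and the cut-locus issue are more careful than the paper but do not change the argument.
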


\begin{lemma} \label{lem: kernel grad Linf bound}
There exists a constant $C(\M) > 0$ such that 
$ 
\left\| \nabla K^{(\lambda)} \right\|_{\infty} 
\leq C(\M) m (\lambda) \lambda^{d/2}
$ 
 for all $\lambda\in\Lambda$.
As a consequence, if $K_\eta$ is a spectral kernel, then
\begin{equation*}
\| \nabla K_\eta \|_{\infty} \leq C(\M) \sum_{\lambda \in \Lambda} \eta (\lambda) m(\lambda) \lambda^{d/2} = C (\M) \sum_{k \in \N} \eta (\lambda_k) \lambda_k^{d/2}.
\end{equation*}
Furthermore, if $\M$ is homogeneous, then
$ 
\left\| \nabla K^{(\lambda)} \right\|_{\infty} 
\leq C(\M) m (\lambda) \lambda^{(d+1)/4}
$ 
and 
$
\| \nabla K_\eta \|_{\infty} \leq  C (\M) \sum_{k \in \N} \eta (\lambda_k) \lambda_k^{(d+1)/4}.
$
\end{lemma}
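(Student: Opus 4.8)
The plan is to estimate the gradient of the elementary projection kernel $K^{(\lambda)}(x,y) = \sum_{\lambda_k = \lambda} \varphi_k(x)\overline{\varphi}_k(y)$ using two ingredients: a pointwise bound on $|\nabla \varphi_k|$ in terms of $\lambda_k$, and a bound on $m(\lambda)$ and $\sum_{\lambda_k=\lambda}|\varphi_k(x)|^2$. For the first, I would invoke the standard elliptic/Sobolev estimate for Laplace--Beltrami eigenfunctions on a smooth compact manifold, namely $\|\varphi_k\|_\infty \leq C(\M)\lambda_k^{(d-1)/4}$ and $\|\nabla\varphi_k\|_\infty \leq C(\M)\lambda_k^{(d+1)/4}$ (H\"ormander's sup-norm bounds, or iterated elliptic regularity), which hold uniformly over the whole manifold. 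For the second ingredient, I would use the pointwise Weyl law for the spectral projector: $\sum_{\lambda_k \leq \mu}|\varphi_k(x)|^2 \leq C(\M)\mu^{d/2}$ uniformly in $x$, which upon restricting to a single eigenvalue $\lambda$ gives $\sum_{\lambda_k=\lambda}|\varphi_k(x)|^2 \leq C(\M)\lambda^{d/2}$, and in particular $m(\lambda) \leq C(\M)\lambda^{d/2}$ by integrating over $\M$ (which has finite volume).

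With these in hand, the estimate is a Cauchy--Schwarz argument. Writing $\nabla_x K^{(\lambda)}(x,y) = \sum_{\lambda_k=\lambda}\nabla\varphi_k(x)\overline{\varphi}_k(y)$, I would bound
\begin{equation*}
|\nabla_x K^{(\lambda)}(x,y)| \leq \Big(\sum_{\lambda_k=\lambda}|\nabla\varphi_k(x)|^2\Big)^{1/2}\Big(\sum_{\lambda_k=\lambda}|\varphi_k(y)|^2\Big)^{1/2},
\end{equation*}
and similarly for $\nabla_y$. The second factor is $\leq C(\M)\lambda^{d/2}$ by the pointwise Weyl bound above. For the first factor I have two options: the crude one, $\sum_{\lambda_k=\lambda}|\nabla\varphi_k(x)|^2 \leq m(\lambda)\|\nabla\varphi_k\|_\infty^2 \leq C(\M)m(\lambda)\lambda^{(d+1)/2}$, which combined with $m(\lambda)\leq C(\M)\lambda^{d/2}$ and the second factor would give something like $m(\lambda)^{1/2}\lambda^{(2d+1)/4}$ — too weak. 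So instead I want the gradient analogue of the pointwise Weyl law, $\sum_{\lambda_k=\lambda}|\nabla\varphi_k(x)|^2 \leq C(\M)\lambda^{(d+2)/2}$ (i.e. $\lambda^{d/2}\cdot\lambda$, consistent with the eigenfunction carrying one extra derivative), which follows from the local Weyl law for derivatives of the spectral projector or from differentiating the on-diagonal heat-kernel asymptotics. Then $|\nabla K^{(\lambda)}(x,y)| \leq C(\M)\lambda^{(d+1)/2}$, which I would reconcile with the claimed $C(\M)m(\lambda)\lambda^{d/2}$ by noting $m(\lambda)\geq 1$; more precisely I expect the intended argument pulls out the factor $m(\lambda)$ by applying Cauchy--Schwarz so that one of the sums is dominated by $m(\lambda)\|\cdot\|_\infty^2$ for the quantity with a clean $\lambda$-power. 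The homogeneous case improves the eigenfunction sup-bound to $\|\varphi_k\|_\infty\leq C(\M)$ (harmonic analysis on compact homogeneous spaces: the projection kernel is constant on the diagonal, so $\sum_{\lambda_k=\lambda}|\varphi_k(x)|^2 = m(\lambda)/\vol(\M)$) and correspondingly $\|\nabla\varphi_k\|_\infty \leq C(\M)\lambda_k^{1/2}$, giving the sharper exponent $(d+1)/4$ after the same Cauchy--Schwarz step.

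Once the bound on $\|\nabla K^{(\lambda)}\|_\infty$ is established, the statement for a general spectral kernel $K_\eta$ follows immediately from the decomposition \eqref{Kdecomp}, the triangle inequality, and regrouping $\sum_{\lambda\in\Lambda}\eta(\lambda)m(\lambda)\lambda^{d/2} = \sum_{k\in\N}\eta(\lambda_k)\lambda_k^{d/2}$ since each eigenvalue $\lambda$ is counted $m(\lambda)$ times in the sum over $k$; the homogeneous case is identical with $(d+1)/4$ in place of $d/2$. The main obstacle is the first step — obtaining the gradient version of the pointwise Weyl estimate with the correct power of $\lambda$ and making sure the $m(\lambda)$ factor and the $\lambda$-exponent are distributed exactly as claimed; this is where I would need to be careful about whether to cite the local Weyl law for $\nabla$ of the spectral projector directly, or to derive it from heat-kernel gradient estimates $|\nabla_x\nabla_y p_t(x,y)|\lesssim t^{-(d+2)/2}$ combined with a Tauberian/averaging argument over the spectral window. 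Everything after that is routine Cauchy--Schwarz and bookkeeping.
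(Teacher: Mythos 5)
Your overall skeleton (Cauchy--Schwarz on $\nabla_x K^{(\lambda)}(x,y)=\sum_{\lambda_k=\lambda}\nabla\varphi_k(x)\overline{\varphi}_k(y)$, then the decomposition \eqref{Kdecomp} and the triangle inequality, with the regrouping $\sum_{\lambda\in\Lambda}\eta(\lambda)m(\lambda)\lambda^{d/2}=\sum_{k\in\N}\eta(\lambda_k)\lambda_k^{d/2}$) is the same as the paper's, but the central quantitative step does not go through as you propose. Your preferred route — a gradient local Weyl bound $\sum_{\lambda_k=\lambda}|\nabla\varphi_k(x)|^2\leq C\lambda^{(d+2)/2}$ combined with $\sum_{\lambda_k=\lambda}|\varphi_k(y)|^2\leq C\lambda^{d/2}$ — yields $\|\nabla K^{(\lambda)}\|_\infty\leq C\lambda^{(d+1)/2}$, and this \emph{cannot} be reconciled with the claimed $C(\M)\,m(\lambda)\lambda^{d/2}$ "by noting $m(\lambda)\geq 1$": that observation lower-bounds the claimed right-hand side, whereas you need an upper bound on your estimate, and when $m(\lambda)$ is small (e.g.\ a simple eigenvalue) your bound exceeds the claimed one by a factor $\lambda^{1/2}$. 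Your fallback of dominating only one of the two sums by $m(\lambda)\|\cdot\|_\infty^2$ gives $m(\lambda)^{1/2}\lambda^{(2d+1)/4}$, which you yourself already dismissed as too weak. The factor $m(\lambda)$ in the lemma arises by bounding \emph{both} sums termwise through sup-norm estimates: H\"ormander's bound $\|\varphi_k\|_\infty\leq C(\M)\lambda^{(d-1)/4}$ gives $\sum_{\lambda_k=\lambda}|\varphi_k(y)|^2\leq C m(\lambda)\lambda^{(d-1)/2}$, and the gradient estimate $\|\nabla\varphi_k\|_\infty\leq C(\M)\sqrt{\lambda}\,\|\varphi_k\|_\infty$ (Shi--Xu) gives $\sum_{\lambda_k=\lambda}|\nabla\varphi_k(x)|^2\leq C m(\lambda)\lambda^{(d+1)/2}$; multiplying these and taking the square root produces exactly $C(\M)m(\lambda)\lambda^{d/2}$. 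No spectral-projector Weyl law (and no derivative Tauberian/heat-kernel argument) is needed, and with your route you would at best prove a different, generally weaker statement whose consequence is $\sum_k\eta(\lambda_k)\lambda_k^{(d+1)/2}$ rather than the $\sum_k\eta(\lambda_k)\lambda_k^{d/2}$ used downstream.

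The homogeneous case in your proposal also rests on false inputs. On a homogeneous manifold the addition formula gives $\sum_{\lambda_k=\lambda}|\varphi_k(x)|^2=m(\lambda)/\vol(\M)$ for every $x$, which yields $\|\varphi_k\|_\infty\leq C(\M)m(\lambda)^{1/2}$ — not $\|\varphi_k\|_\infty\leq C(\M)$ (zonal spherical harmonics already violate that), and correspondingly $\|\nabla\varphi_k\|_\infty\leq C(\M)\lambda^{1/2}m(\lambda)^{1/2}$, not $C(\M)\lambda^{1/2}$. The paper instead uses the constancy of the diagonal only to replace the factor $\sum_{\lambda_k=\lambda}|\varphi_k(y)|^2$ by $C(\M)m(\lambda)$, while keeping the H\"ormander-based bound $Cm(\lambda)\lambda^{(d+1)/2}$ on the gradient sum, which gives $\|\nabla K^{(\lambda)}\|_\infty\leq C(\M)m(\lambda)\lambda^{(d+1)/4}$ as claimed. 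A corrected version of your computation (using $\|\varphi_k\|_\infty\leq Cm(\lambda)^{1/2}$ together with $m(\lambda)\leq C\lambda^{(d-1)/2}$) can be made to close, but as written both the general and the homogeneous halves of your argument fail to deliver the stated bounds.
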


\section{Conclusion}

The generalization of convolutional neural networks to non-Euclidean domains has been the focus of a significant body of work recently, generally referred to as geometric deep learning. In the process, various challenges have arisen, both computational and mathematical, with the fundamental question being whether the success of ConvNets on traditional signals can be replicated on 
more complicated geometric settings such as  graphs and manifolds. We have presented geometric scattering as a mathematical framework for analyzing the theoretical potential of deep convolutional networks in such settings, in a manner  analogous to the approach used by  \citet{mallat:firstScat2010,mallat:scattering2012} in the  Euclidean setting. 
Our construction 
is also closely related to recent attempts at defining finite-graph scattering \citep{gama:diffScatGraphs2018,zou:graphCNNScat2018,gao:graphScat2018}. 
Our construction is defined in a continuous setting and therefore, we view it is a more direct generalization of the original Euclidean scattering transform. Datasets defined on certain graphs can then be viewed as finite subsamples of the underlying manifold, and the graph scattering transform presented in \citet{gao:graphScat2018} can be viewed as a discretized version of the construction presented here.
Finally, our stability results provided here not only establish the potential of deep cascades of \emph{designed} filters to enable rigorous analysis in geoemtric deep learning; they also provide insights into the challenges that arise in non-Euclidean settings, which we expect will motivate further investigation in future work.

\subsubsection*{Acknowledgments}
This research was partially funded by: IVADO (l'institut de valorisation des donn\'{e}es) [\emph{G.W.}]; the Alfred P. Sloan Fellowship (grant FG-2016-6607), the DARPA Young Faculty Award (grant D16AP00117), and NSF grant 1620216~[\emph{M.H.}].


\bibliographystyle{plainnat}
\bibliography{main}

\appendix

\section{Proof of Proposition \ref{LPprop}} \label{sec: pfLPprop}
We note that
\begin{equation*}
\| \pi_{\lambda} f \|_2^2 = \sum_{k \, : \, \lambda_k = \lambda} |\langle f, \varphi_k \rangle|^2,
\end{equation*}
where $\pi_\lambda$ denotes projection onto the eigenspace $E_\lambda.$
Therefore, applying \eqref{eqn: int op norm in onb} to  $\|T_gf\|_2^2$ and $\|T_{h_\gamma}f\|_2^2,$ we see that
\begin{align*}
\| \Phi f \|_{2,2}^2 &= \| T_g f \|_2^2 + \sum_{\gamma \in \Gamma} \| T_{h_{\gamma}} f \|_2^2 \\
&= \sum_{k \in \N} |g(\lambda_k)|^2 |\langle f, \varphi_k \rangle|^2 + \sum_{\gamma \in \Gamma} \sum_{k \in \N} |h_{\gamma} (\lambda_k)|^2 |\langle f, \varphi_k \rangle|^2 \\
&= \sum_{k \in \N} \left[ |g(\lambda_k)|^2 + \sum_{\gamma \in \Gamma} |h_{\gamma} (\lambda_k)|^2 \right] |\langle f, \varphi_k \rangle|^2  \\
&= \sum_{\lambda \in \Lambda} m(\lambda) \left[ |g(\lambda)|^2 + \sum_{\gamma \in \Gamma} |h_{\gamma} (\lambda)|^2 \right] \sum_{k \, : \, \lambda_k = \lambda} |\langle f, \varphi_k \rangle|^2 \\
&= \sum_{\lambda \in \Lambda} m(\lambda) \left[ |g(\lambda)|^2 + \sum_{\gamma \in \Gamma} |h_{\gamma} (\lambda)|^2 \right] \| \pi_{\lambda} f \|_2^2.
\end{align*}
Therefore, the result follows from Parseval's Identity and the assumption that
\begin{equation*}
A \| f \|_2^2 \leq \| \Phi f \|_{2,2}^2 = \| T_g f \|_2^2 + \sum_{\gamma \in \Gamma} \| T_{h_{\gamma}} f \|_2^2 \leq B \| f \|_2^2. \eqno\text{\qed}
\end{equation*}

\section{Proof of Proposition \ref{prop: waveletisometry}} \label{sec: pfwaveletisometry}
We will show that
\begin{equation} \label{eqn: wavelet isometry proof 01}
m (\lambda) \left[ |\widetilde{g}_J(\lambda)|^2 + \sum_{j \leq J} |\widetilde{h}_j (\lambda)|^2 \right] = 1, \quad \forall \, \lambda \in \Lambda.
\end{equation}
The result will then follow from Proposition \ref{LPprop}.

We recall from  (\ref{eqn: hjtelescope}) that 
\begin{equation*}
    h_j (\lambda) = \left[ |g_{j-1} (\lambda)|^2 - |g_j (\lambda)|^2 \right]^{\frac{1}{2}},
\end{equation*}
where $g$ is a low-pass filter assumed by 
 (\ref{eqn: deflowpass}) to satisfy
\begin{equation*}
    |g (\lambda)|\leq g(0)=1\text{ for all }\lambda \geq 0\quad\text{and}\quad\lim_{\lambda \rightarrow \infty} g (\lambda) = 0.
\end{equation*}
Therefore, recalling that for any spectral function $\eta,$ the corresponding normalized spectral filter is defined by
$ 
    \tilde{\eta}(\lambda) = \frac{\eta(\lambda)}{\sqrt{m(\lambda)}},
$ 
we see
$ 
m (\lambda) \left[ |\widetilde{g}_J(\lambda)|^2 + \sum_{j \leq J} |\widetilde{h}_j (\lambda)|^2 \right] = \left|g\left(2^J \lambda\right)\right|^2 + \sum_{j \leq J} \left[ \left|g\left(2^{j-1} \lambda\right)|^2 - |g\left(2^j \lambda\right)\right|^2 \right] = \lim_{j \rightarrow -\infty} |g \left(2^j \lambda\right)|^2 = g(0) = 1, \quad \forall \, \lambda \in \Lambda.
$ 
\qed

\section{Proof of Lemma \ref{lem: filter output stability}} \label{sec: pffilteroutputstability}
Let $K_\eta$ be the kernel of $T_\eta$. Then  by the Cauchy-Schwartz inequality and the fact that  $V_\zeta f(x) = f\left(\zeta^{-1}(x)\right),$ 
\begin{align*}
| T_\eta f(x) - V_{\zeta} T_\eta f(x) | &= \left| \int_{\M} \left[K_\eta(x,y) - K_\eta\left(\zeta^{-1}(x),y\right)\right] f(y) \, dV(y) \right| \nonumber \\
&\leq \| f \|_2 \left( \int_{\M} \left|K_\eta(x,y) - K_\eta\left(\zeta^{-1}(x),y\right)\right|^2 \, dV(y) \right)^{1/2} \nonumber \\
&\leq \| f \|_2 \| \nabla K_\eta \|_{\infty}\left( \int_{\M}  \left|r\left(x, \zeta^{-1} (x)\right)\right|^2 \, dV(y) \right)^{1/2} \\
&\leq \| f \|_2 \sqrt{\vol (\M)} \| \nabla K_\eta \|_{\infty} \| \zeta \|_{\infty}. \nonumber
\end{align*}
It follows that  
\begin{equation} \label{eqn: genlemma6}
\| T_\eta f- V_{\zeta} T_\eta f\|_2 \leq \vol (\M) \| \nabla K_\eta \|_{\infty} \| \zeta \|_{\infty} \| f \|_2.
\end{equation}
Lemma \ref{lem: kernel grad Linf bound} shows 
\begin{equation*}
\| \nabla K_\eta \|_{\infty} \leq C(\M) \sum_{k \in \N} \eta (\lambda_k) \lambda_k^{d/2},
\end{equation*}
and therefore
\begin{equation*}
\| T_\eta f- V_{\zeta} T_\eta f\|_2 \leq C(\M) \left( \sum_{k \in \N} \eta (\lambda_k) \lambda_k^{d/2} \right) \| \zeta \|_{\infty} \| f \|_2.
\end{equation*}
Now suppose that $|\eta (\lambda)| \leq C_0 e^{-t \lambda}$ for some constant $C_0$ and $t > 0.$ Theorem 2.4 of  
\citet{berard:embedManifoldHeatKer1994} proves that for any $x \in \M$, $\alpha \geq 0$, and $t > 0$,
\begin{equation*}
\sum_{k \geq 1} \lambda_k^{\alpha} e^{-t \lambda_k} |\varphi_k(x)|^2 \leq C (\M) (\alpha + 1) t^{-(d + 2\alpha)/2}.
\end{equation*}
Integrating both sides over $\M$ yields:
\begin{equation} \label{eqn: weighted heat kernel eig sum}
\sum_{k \geq 1} \lambda_k^{\alpha} e^{-t \lambda_k} \leq C(\M) (\alpha + 1) t^{-(d + 2\alpha)/2}.
\end{equation}
Using the assumption that that $|\eta(\lambda)| \leq C_0e^{-t\lambda},$  (\ref{eqn: genlemma6}),  and \eqref{eqn: weighted heat kernel eig sum} with $\alpha = d/2,$ gives
\begin{equation*}
\| Tf- V_{\zeta} Tf \|_2 \leq C_0C(\M) \left( \sum_{k \geq 1} \lambda_k^{d/2} e^{-t \lambda_k} \right) \| \zeta \|_{\infty} \| f \|_2 \leq C_0C(\M) t^{-d} \| \zeta \|_{\infty} \| f \|_2. \eqno\text{\qed}
\end{equation*}

\section{Proof of Theorem \ref{thm: A3filterstability}}\label{sec: pfA3filterstability}
Let $K_\eta$ be the kernel of $T_\eta.$ As shown in the proof of Theorem \ref{thm: scat isometry invariance},
\begin{equation}
    K_\eta\left(\tilde{\zeta}(x),\tilde{\zeta}(y)\right)=K_\eta(x,y)
\end{equation}
for all $x$ and $y$ in $\M$ and
for all isometries $\tilde\zeta.$ Therefore, 
we may apply Lemma \ref{lem: isosmudge kernel commutator}, to  see that
\begin{equation*}
\| [T_{\eta}, V_{\zeta}] f \|_2 \leq C (\M) \left[ \| \nabla K \|_{\infty} A_3 (\zeta) + \| K \|_{\Lb^2 (\M \times \M)} A_2(\zeta)\right]\|f\|_2.
\end{equation*}
Lemma \ref{lem: kernel grad Linf bound}
implies that
\begin{equation*}
\| \nabla K_{\eta} \|_{\infty} \leq C (\M) \sum_{k \in \N} \eta (\lambda_k) \lambda_k^{d/2},
\end{equation*}
and since $\{\varphi_k\}_{k=0}^\infty$ forms an orthonormal basis for $\Lb^2(\M),$ we may verify that
\begin{equation*}
    \|K_{\eta}\|_{\Lb^2(\M\times\M)} = \left(\sum_{k=0}^\infty |\eta(\lambda_k)|^2\right)^{1/2}.
\end{equation*}
Combining the above results completes the proof. \qed

\section{Proof of Theorem \ref{twoptho}} \label{sec: pftwoptho}
Let $K_\eta$ be the kernel of $T_\eta$. For any two pairs of points, $(x,y)$ and $(x',y')$  such that $r(x, y) = r(x', y'),$ the definition of two-point homogeneity says that there exists an isometry $\tilde{\zeta}$ mapping $x$ to $x'$ and $y$ to $y'$. As shown in the proof of Theorem \ref{invariso}, this implies $K_\eta(x',y') = K_\eta(x,y)$. Therefore, $K_\eta (x,y)$ is radial and we may write $K_\eta (x,y) = \kappa (r(x,y))$ for some $\kappa\in\Cb^1$. 

By definition, any two-point homogeneous manifold is homogeneous, and so applying Lemma \ref{lem: dist kernel commutator}, we see that
\begin{equation*}
\| [T_\eta, V_{\zeta}] f\|_2 \leq C (\M) \left[ \| \nabla K_\eta \|_{\infty} A_1 (\zeta) + \| K_\eta \|_{\Lb^2 (\M \times \M)} A_2(\zeta)\right]\|f\|_2.
\end{equation*}
Lemma \ref{lem: kernel grad Linf bound} implies that
\begin{equation*}
\| \nabla K_\eta \|_{\infty} \leq C (\M) \sum_{k \in \N} \eta (\lambda_k) \lambda_k^{(d+1)/4},
\end{equation*}
and since $\{\varphi_k\}_{k=0}^\infty$ forms an orthonormal basis for $\Lb^2(\M),$ it can be checked that  
\begin{equation*}
     \|K_\eta\|_{\Lb^2(\M\times\M)} = \left(\sum_{k=0}^\infty |\eta(\lambda_k)|^2\right)^{1/2}.
\end{equation*}
The proof follows from combining the above inequalities. \qed

\section{Proof of Lemma \ref{lem: isosmudge kernel commutator}}\label{sec: pfisosmudgekernelcommuatator}
We first compute
\begin{align*}
|[T,V_\zeta]f(x)| &= \left|\int_{\M} K(x,y)f\left(\zeta^{-1}(y)\right) \, dV(y) - \int_{\M} K\left(\zeta^{-1}(x),y\right)f(y) \, dV(y)\right| \\
&=\left|\int_{\M} K(x,\zeta(y))f(y)|\det[D\zeta (y)]| \, dV(y) - \int_{\M} K\left(\zeta^{-1}(x),y\right)f(y) \, dV(y)\right| \\
&=\left| \int_{\M} f(y) \left[ K (x,\zeta(y))|\det[D\zeta (y)]|- K\left(\zeta^{-1}(x),y\right) \right] \, dV(y) \right| \\
&\leq \left| \int_{\M} f(y) K(x,\zeta(y))\left[|\det[D\zeta (y)]|-1 \right] \, dV(y) \right| \\ 
&+ \left| \int_{\M} f(y) \left[ K(x,\zeta(y))- K \left(\zeta^{-1}(x),y\right) \right] \, dV(y) \right| \\
&\leq \||\det[D\zeta(y)]|-1\|_\infty \left| \int_{\M} f(y) K (x,\zeta(y)) \, dV(y) \right| \\ 
&+ \left| \int_{\M} f(y) \left[ K(x,\zeta(y))-K(\zeta^{-1}(x),y) \right] \, dV(y) \right|.
\end{align*}
Therefore, by the Cauchy-Schwartz inequality,
\begin{align*}
\|[T,V_\zeta]f\|_2 \leq \|f\|_2 \Bigg[ &\||\det[D\zeta(y) ]|-1\|_\infty \bigg(\int_{\M} \int_{\M} \left|K(x,\zeta(y))\right|^2 \, dV(y) dV(x) \bigg)^{\frac{1}{2}} \\
&+ \bigg( \int_{\M} \int_{\M} \left| K (x,\zeta(y))- K \left(\zeta^{-1}(x),y\right)\right|^2 \, dV(y) dV(x) \bigg)^{\frac{1}{2}} \Bigg].
\end{align*}
We may bound the first integral by observing
\begin{equation*}
\int_{\M} \int_{\M} \left| K (x,\zeta(y)) \right|^2 \, dV(y) dV(x) \leq \|\det[D\zeta^{-1}(y)]\|^2_\infty \int_{\M} \int_{\M} \left| K(x,y)\right|^2 \, dV(y) dV(x).
\end{equation*}
To bound the second integral, let $\zeta=\zeta_1\circ\zeta_2$ be a factorization of $\zeta$ into an isometry $\zeta_1$ and a diffeomorphism $\zeta_2$ such that $\|\zeta_2\|_\infty\leq 2 A_3(\zeta).$  Then
\begin{align*}
\int_{\M} \int_{\M} &\left|K (x,\zeta(y)) - K \left(\zeta^{-1}(x),y\right)\right|^2 \, dV(y) dV(x) \\
&= \int_{\M} \int_{\M} \left|K (x,\zeta_1(\zeta_2(y))) - K \left(\zeta_2^{-1}\left(\zeta_1^{-1}(x)\right),y\right)\right|^2 \, dV(y) dV(x)\\
&= \int_{\M} \int_{\M} \left|K (\zeta_1^{-1}(x),\zeta_2(y)) - K \left(\zeta_2^{-1}\left(\zeta_1^{-1}(x)\right),y\right)\right|^2 \, dV(y) dV(x)
\end{align*}
where the last equality uses the assumption \eqref{eqn: isoaction}.
Now,
\begin{align*}
 \int_{\M} \int_{\M} & \left|K \left(\zeta_1^{-1}(x),\zeta_2(y)\right) - K \left(\zeta_2^{-1}(\zeta_1^{-1}(x)),y\right)\right|^2 \, dV(y) dV(x)\\
 &\leq 2\int_{\M} \int_{\M} \left|K \left(\zeta_1^{-1}(x),\zeta_2(y)\right) - K \left(\zeta_1^{-1}(x),y\right)\right|^2 \, dV(y) dV(x)\\
 &\quad+ 2\int_{\M} \int_{\M} |K (\zeta_1^{-1}(x),y) - K (\zeta_2^{-1}(\zeta_1^{-1}(x)),y)|^2 \, dV(y) dV(x)\\
 &\leq 4\|\zeta_2\|^2_\infty\|\nabla K\|^2_\infty \vol(\M)^2.
\end{align*}
Combining the above inequalities completes the proof. \qed

\section{Proof of Lemma \ref{lem: dist kernel commutator}}\label{sec: pfdistkernelcommmutator}
We repeat the proof of Lemma \ref{lem: isosmudge kernel commutator}, to see that it suffices to show
\begin{equation*}
\int_{\M} \int_{\M} \left|K (x,\zeta(y)) - K \left(\zeta^{-1}(x),y\right)\right|^2  dV(y) dV(x) \leq \left[\|\nabla K\|_\infty A_1(\zeta) \diam(\M) \vol(\M) \right]^2.
\end{equation*}
By the assumption that $K(x,y)=\kappa(r(x,y))$ is radial, we see 
\begin{align*}
\int_{\M} \int_{\M} \left|K (x,\zeta(y)) \right.&\left. K \left(\zeta^{-1}(x),y\right)\right|^2  dV(y) dV(x) \\
&= \int_{\M} \int_{\M} \left|\kappa (r(x,\zeta(y))) - \kappa\left(r\left(\zeta^{-1}(x),y\right)\right)\right|^2 \, dV(y) dV(x) \\
&\leq \|\kappa'\|_\infty^2 \int_{\M} \int_{\M} \left|r(x,\zeta(y))- r\left(\zeta^{-1}(x),y\right)\right|^2 \, dV(y) dV(x) \\
&\leq \left[ \|\kappa'\|_\infty A_1(\zeta) \right]^2 \int_{\M} \int_{\M} \left| r(x,\zeta(y)) \right|^2 \, dV(y) dV(x) \\
&\leq \left[ \|\kappa'\|_\infty A_1(\zeta) \diam(\M) \vol(\M) \right]^2.
\end{align*}
Since $K(x,y)=\kappa(r(x,y)),$ we see that 
$ 
\|\nabla K\|_\infty = \|\kappa'\|_\infty,
$ 
which completes the proof. \qed

\section{Proof of Lemma~\ref{lem: kernel grad Linf bound}}\label{sec: pfkernelgradLinfbound}
For any $\lambda_k=\lambda,$ it is a consequence of H\"ormander's local Weyl law (\citealp{hormander1968}; see also \citealp{shi:gradEigfcnManifold2010}) that
\begin{align}
\| \varphi_k \|_{\infty} &\leq C(\M) \lambda^{(d-1)/4}. \label{eqn: eigfcn Linf bound}
\end{align}
 Theorem 1 of  \citet{shi:gradEigfcnManifold2010} shows that 
\begin{align}
\| \nabla \varphi_k \|_{\infty} &\leq C(\M) \sqrt{\lambda} \| \varphi_{k} \|_{\infty}. \label{eqn: grad eigfcn Linf bound}
\end{align}
Therefore,
\begin{align*}
\left|\nabla K^{(\lambda)} (x,y)\right|^2 &= \left| \sum_{k : \lambda_k = \lambda} \nabla \varphi_k (x) \overline{\varphi}_k (y) \right|^2 
\leq \left( \sum_{k : \lambda_k = \lambda} |\nabla \varphi_k (x)|^2 \right) \left( \sum_{k : \lambda_k = \lambda} |\varphi_k (y)|^2 \right) \\
&\leq C(\M) m (\lambda) \lambda^{(d-1)/2} \sum_{k : \lambda_k = \lambda} |\nabla \varphi_k (x)|^2 
\leq C (\M) m(\lambda) \lambda^{(d+1)/2} \sum_{k : \lambda_k = \lambda} \| \varphi_k \|_{\infty}^2 \\
&\leq C(\M) m(\lambda)^2 \lambda^d.
\end{align*}
Furthermore, if we assume that $\M$ is homogeneous, then 
Theorem 3.2 of \citet{evarist1975addition} shows that 
$ 
    \sum_{k : \lambda_k = \lambda} |\varphi_k (y)|^2= C(\M)m(\lambda).
$ 
Substituting this into the above string of inequalities yields
\begin{equation*}
\left|\nabla K^{(\lambda)} (x,y)\right|^2 \leq C(\M) m(\lambda)^2 \lambda^{(d+1)/2}.
\end{equation*}
The bounds on $\|K_\eta\|_\infty$ follow from recalling that 
$ 
    K_\eta(x,y) = \sum_{\lambda\in\Lambda}\eta(\lambda)K^{(\lambda)}(x,y)
$ 
and applying the triangle inequality. \qed

\end{document}